\newcommand\NoBlackBoxes{\global\overfullrule0pt}
\let\serieslogo@\relax
\let\@setcopyright\relax
\newtheorem{definition}{Definition}[section]
\newtheorem{theorem}[definition]{Theorem}
\newtheorem{rem}[definition]{Remark}
\newtheorem{corollary}[definition]{Corollary}
\renewcommand{\P}{{\mathbb{P}}}
\newcommand{\E}{{\mathbb{E}}}
\newcommand{\X}{{\mathcal{X}}}
\renewcommand{\epsilon}{\varepsilon}
\renewcommand{\phi}{\varphi}
\newcommand{\V}{{\mathbb{V}}}
\newcommand{\tc}{{\tilde c}}
\begin{document}

\title{Associative Memories to Accelerate Approximate Nearest Neighbor Search}

\author[Vincent Gripon]{Vincent Gripon}
 \address[Vincent Gripon]{Telecom Bretagne UMR CNRS Lab-STICC}
 \email[Vincent Gripon]{vincent.gripon@telecom-bretagne.eu}

 \author[Matthias L\"owe]{Matthias L\"owe}
 \address[Matthias L\"owe]{Fachbereich Mathematik und Informatik,
University of M\"unster,
 Einsteinstra\ss e 62,
48149 M\"unster,
Germany}
 \email[Matthias L\"owe]{maloewe@math.uni-muenster.de}

 \author[Franck Vermet]{Franck Vermet}
 \address[Franck Vermet]{Laboratoire de Math\'ematiques, UMR CNRS 6205, Universit\'e de Bretagne Occidentale,  6, avenue Victor Le Gorgeu\\
 CS 93837\\
 F-29238 BREST Cedex 3\\
 France}

 \email[Franck Vermet]{Franck.Vermet@univ-brest.fr}

\begin{abstract}
Nearest neighbor search is a very active field in machine learning for it appears in many application cases, including classification and object retrieval. In its canonical version, the complexity of the search is linear with both the dimension and the cardinal of the collection of vectors the search is performed into. Recently many works have focused on reducing the dimension of vectors using quantization techniques or hashing, while providing an approximate result. In this paper we focus instead on tackling the cardinal of the collection of vectors. Namely, we introduce a technique that partitions the collection of vectors and stores each part in its own associative memory. When a query vector is given to the system, associative memories are polled to identify which one contain the closest match. Then an exhaustive search is conducted only on the part of vectors stored in the selected associative memory. We study the effectiveness of the system when messages to store are generated from i.i.d. uniform $\pm$1 random variables or 0-1 sparse i.i.d. random variables. We also conduct experiment on both synthetic data and real data and show it is possible to achieve interesting trade-offs between complexity and accuracy.
\end{abstract}

\keywords{Similarity search, vector search, associative memory, exponential inequalities}

\maketitle

\newcommand{\vep}{\varepsilon}

\section{Introduction}
Nearest neighbor search is a fundamental problem in computer science that consist of finding the data point, in a previously given collection, that is the closest to some query one, according to a specific metric or similarity measure. When points are distributed uniformly and independently in the search space, and when the space is much larger than the cardinality of data points, it is well known that one cannot expect to solve this problem with a complexity lesser than linear with cardinality and dimensionality of data points. This complexity is achieved by using an exhaustive search where distances between all data points and the query one are computed.

Nearest neighbor search is found in a wide set of applications, including object retrieval, classification, computational statistics, pattern recognition, vision, data mining\dots In many of these domains, collections and dimensions are large, leading to unreasonable computation time when using exhaustive search. For this reason, many recent works have been focusing on approximate nearest neighbor search, where complexities can be greatly reduced at the cost of a nonzero error probability in retrieving the closest match. Most methods act either on cardinality~\cite{muja_lowe,muja_lowe_2009,gong2011iterative} or on dimensionality~\cite{jegou2011product,datar2004locality,gripon_similarity} of data points. Reducing cardinality often requires to be able to partition space in order to identify interesting regions that are likely to contain the nearest neighbor, thus reducing the number of distances to compute, whereas reducing dimensionality is often achieved through quantization techniques, and thus approximate distance computation.

In this note we will present and investigate an alternative approach that was introduced in \cite{yu2015neural} in the context of sparse data. Its key idea is to partition the patterns into equal-sized classes and to compute the overlap of an entire class with the given query vector using associative memories based on neural networks. If this overlap of a given class is above a certain threshold, we decide that the given vector is similar to one of the vectors in the considered class and we perform exhaustive search on this class, only. Obviously, this is a probabilistic method that comes with a certain probability of failure. We will impose two restrictions on the relation between dimension and the size of the database, or equivalently, between dimension and the number of vectors in each class: On the one hand, we want the size of the database to be so large that the algorithm is efficient compared to other methods (primarily exhaustive search), on the other hand we need it to be small enough, to let the error probability converge to 0, as dimension (and possibly the size of the database) converge to infinity.

More precisely, in this note we will consider two principally different scenarios: one where the input patterns are sparse, i.e. consist of $0$s and $1$s with a majority of $0$s and another scenario, where the data are ``unbiased''. This situation is best modeled by choosing the coordinates of all the input vector as independent and identically distributed (i.i.d. for short) $\pm 1$-random variables with equal probabilities for $+1$ and for $-1$. Even though the principal method is similar, both cases have their individual difficulties. As a matter of fact, similar problems occur when analyzing the Hopfield model \cite{Hopfield1982} of neural networks for unbiased patterns (see \cite{MPRV}) or for sparse, biased patterns (see e.g. \cite{GHLV16} or \cite{Lo_biased}). We will analyze the situation of sparse patterns in Section 3, while Section 4 is devoted to the investigation of the situation where the patterns are unbiased and dense. In Section 5 we will support our findings with simulations on classical approximate nearest neighbor search benchmarks.

\section{Related work}

Part of the literature focus on using tree-based methods to partition space~\cite{muja_lowe_2009}, resulting with difficulties when facing high dimension spaces~\cite{muja_lowe}. When facing real valued vectors, many techniques propose binary encoding of vectors~\cite{gong2011iterative,he2013k,weiss2009spectral}, leading to very efficient reductions in time complexity. As a matter of fact, search in Hamming space can be performed very efficiently, but it is well known that premature discretization of data usually leads to significant loss in precision.

In order to achieve better performance, quantization techniques based on Product Quantization (PQ) and Locality Sensitive Hashing (LSH) can be used. In PQ~\cite{jegou2011product}, vectors are split into subvectors. Each subvector space is quantized independently from the others. Then a vector is quantized by concatenating the quantized versions of its subvectors. Optimized versions of PQ have been proposed by performing joint optimization~\cite{ge2013optimized,norouzi2013cartesian}. In LSH, multiple hash functions are defined, resulting in storing each point index multiple times~\cite{norouzi2012fast,datar2004locality}.

In~\cite{yu2015neural}, the authors propose to split data points into multiple parts, each one stored in its own sparse associative memory. The ability of these memories to store a large number of messages result in very efficient reduction in time complexity when data points are sparse binary vectors. In~\cite{gripon_similarity}, the authors propose to use sum of vectors instead of associative memories and discuss optimization strategies (including online scenarios). The methods we analyze in this note are in the same vein.

\section{Search Problems with sparse patterns}
\label{sparse}

In this section we will treat the situation, where we try to find a pattern that is closest to some input pattern and all the patterns are binary and sparse. To be more precise, assume that we are given n sparse vectors or patterns
$x^\mu, \mu=1, \ldots, n$ with $x^\mu \in \{0,1\}^d$ for some (large) $d$. Assume that the $(x^\mu)$ are random and i.i.d. and have i.i.d. coordinates such that
$$
\P( x_i^\mu= 1)= \frac cd = 1- \P(x_i^\mu=0)
$$
for all $i= 1, \ldots ,  d$ and all $\mu=1, \ldots ,n$.
To describe a sparse situation we will also assume $c$ may depend on $d$ but that $c/d \to 0$ as  $c$ and $d$ become large. Actually we will even need that this convergence is fast enough.  Moreover, we will always assume that there is an $x^\mu$ such that the query pattern $x^0$ has a macroscopically large overlap with $x^\mu$, hence that $\sum_{l=1}^d x_l^0 x_l^{\mu}$ is of order $c$.

We will actually begin with the situation that $x^0=x^\mu$ for an index $\mu$. The situation where $x^0$ is a perturbed version of $x^\mu$ is then similar.

The algorithm proposed in \cite{yu2015neural} now proposes to partition the patterns into equal-sized classes $\X^1, \ldots \X^q$, with $|\X^i|=k$ for all $i= 1, \ldots q$. Thus $n=kq$. Afterwards one computes the score
$$
s(\X^i, x^0)= \sum_{\mu: x^\mu \in \X^i}\sum_{l,m=1}^d x_l^0 x_m^0 x_l^\mu x_m^\mu
$$
for each of these classes and takes the class $\X^i$ with the highest score. Then, on this class one performs exhaustive search. We will analyze under which condition on the parameters this algorithm works well and when it is effective.
To this end, we will show the following theorem.

\begin{theorem}\label{theo_sparse}
Assume that the query pattern is equal to one of the patterns in the database. The algorithm described above works asymptotically correctly, i.e. with an error probability that converges to 0 and is efficient, i.e. requires less computations than exhaustive search, if
\begin{enumerate}
\item  $d \ll k \ll d^2$, i.e. $\frac k{d^2} \to 0$ and $\frac k{d} \to \infty$,
\item and $q e^{-\frac 1 {32} \frac{d^2}k} \to 0$.
\end{enumerate}
\end{theorem}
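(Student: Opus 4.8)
The plan is to show that, with probability tending to one, the class $\X^{i_0}$ that actually contains the query's twin $x^{\mu_0}=x^0$ attains the strictly largest score, and then to check the complexity bookkeeping separately. The first observation is the algebraic identity $\sum_{l,m=1}^d x_l^0 x_m^0 x_l^\mu x_m^\mu=\big(\sum_{l=1}^d x_l^0 x_l^\mu\big)^2=:S_\mu^2$, so that
$$
s(\X^i,x^0)=\sum_{\mu:\,x^\mu\in\X^i}S_\mu^2
$$
is a sum of $k$ squared overlaps. For the correct class the term $\mu=\mu_0$ contributes exactly $\|x^0\|_0^2$, which by a Chernoff bound for the $\mathrm{Bin}(d,c/d)$ number of ones of $x^0$ concentrates around $c^2$; this is the \emph{signal}. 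Every other term is a \emph{noise} contribution $S_\mu^2$, with $S_\mu\sim\mathrm{Bin}(\|x^0\|_0,c/d)$ conditionally on $x^0$.

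First I would record the (perhaps counter-intuitive) point that the noise is not negligible in absolute terms: each class, correct or not, carries a baseline $\E[\sum_\mu S_\mu^2]$ of order $kc^2/d+kc^4/d^2$, which is much larger than the signal $c^2$ because $k\gg d$. Hence one cannot detect the correct class by an absolute threshold; the whole point is to see a small bump $c^2$ on top of a large, fluctuating background. The correct reduction is therefore to the events
$$
\Big\{\,N_i-N_{i_0}\ \ge\ \|x^0\|_0^2\,\Big\},\qquad i\neq i_0,
$$
where $N_i=\sum_{\mu\in\X^i}S_\mu^2$ collects the noise of class $i$ (for $i_0$ omitting $\mu_0$). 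Since $N_i$ and $N_{i_0}$ have essentially the same mean, I would split the signal in half and control an upper deviation of $N_i$ and a lower deviation of $N_{i_0}$, each of size $\approx c^2/2$, around that common mean.

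The heart of the matter is an exponential inequality for these deviations, and this is the step I expect to be the main obstacle. Both $N_i$ and $N_{i_0}$ are sums of $\approx k$ i.i.d.\ copies of $S^2$; feeding the variance $\mathrm{Var}(S^2)$, rather than the crude range $\|x^0\|_0^2$, into a Bernstein/Bennett-type bound yields a tail of the form $\exp\!\big(-\,\mathrm{const}\cdot c^4/(k\,\mathrm{Var}(S^2))\big)$, and with $\mathrm{Var}(S^2)$ of order $c^4/d^2$ this becomes $\exp(-\tfrac{1}{32}d^2/k)$. The difficulty is precisely that $S^2$ has a very large range (up to $c^2$), so a naive Hoeffding estimate is useless and the linear term in Bernstein would dominate. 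The remedy is truncation: one first shows by a union bound that no single overlap $S_\mu$ is anomalously large (the probability that $S_\mu$ reaches order $c$ is factorially small and survives multiplication by $n=kq$), and then runs the variance-driven concentration on the truncated variables. It is here that the assumption ``$c/d\to0$ fast enough'' (morally $c\lesssim\sqrt d$) is used, to keep $\mathrm{Var}(S^2)$ of order $c^4/d^2$ and so produce the rate $\tfrac{1}{32}d^2/k$. A union bound over the $q-1$ wrong classes then upgrades the per-class estimate to $q\,e^{-\frac{1}{32}d^2/k}\to0$, which is condition (2); note that $k\ll d^2$ is exactly what makes $d^2/k\to\infty$, so that this is even possible.

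Finally I would verify efficiency. Storing each class as an associative memory (the correlation matrix $\sum_{\mu\in\X^i}x^\mu (x^\mu)^\top$) lets one read off $s(\X^i,x^0)$ as a quadratic form supported on the $\approx c$ active coordinates of $x^0$; scoring all $q$ classes then costs of order $qdc$ operations, whereas exhaustive search over all $n$ patterns costs of order $nc=kqc$, and the subsequent exhaustive pass is confined to one class of size $k$. The ratio of scoring overhead to exhaustive cost is of order $d/k$ (the exact constant depending on the read model), hence negligible precisely when $d\ll k$, which is the lower bound in condition (1). I expect the truncation-plus-variance step to be the only genuinely delicate point; the Chernoff bound for the signal, the union bound over classes, and the complexity count are routine.
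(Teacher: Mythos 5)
Your skeleton coincides with the paper's own proof: the same identity $s(\X^i,x^0)=\sum_{\mu:x^\mu\in\X^i}S_\mu^2$, the same signal term $\approx c^2$ (the paper's $\tilde c^2$ with $\tilde c\approx c$ by concentration), the same union bound over the $q-1$ wrong classes reducing everything to one two-class comparison in which the large common noise mean cancels, and the same halving of the deviation. The genuine divergence is in the core concentration step. The paper centers $X_\mu$ at $c^2/d$, splits off the \emph{linear} cross term $2\frac{c^2}{d}\left(\sum_{\mu\le k}X_\mu-\sum_{\mu>k}X_\mu\right)$ as a separate event with its own exponential bound, and controls the quadratic part via $\E e^{tZ^2}$ for the standardized binomial $Z$, compared to the Gaussian value $(1-2t)^{-1/2}$ through a Lindeberg swap, finally taking $t=d/(8k)$. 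You instead keep $S_\mu^2$ whole and propose truncation plus a Bernstein bound driven by $\operatorname{Var}(S^2)$. Your diagnosis that untruncated Hoeffding/Bernstein fails is correct, and your efficiency accounting (overhead ratio $d/k$, hence $d\ll k$) matches the paper.

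The gap is your variance claim, and it is not cosmetic. Writing $S=m+Z$ with $m=c^2/d$ and $\sigma^2=\operatorname{Var}(S)\approx c^2/d$, one has
\begin{align*}
\operatorname{Var}(S^2)=4m^2\sigma^2+4m\,\E Z^3+2\sigma^4+\kappa_4\approx 4\frac{c^6}{d^3}+6\frac{c^4}{d^2}+\frac{c^2}{d},
\end{align*}
because for $B(c,c/d)$ both the third central moment and the fourth cumulant are $\approx c^2/d$. Hence $\operatorname{Var}(S^2)\asymp c^4/d^2$ holds \emph{iff} $c\asymp\sqrt d$, not under your one-sided condition $c\lesssim\sqrt d$: when $c\ll\sqrt d$ the dominant term is $\kappa_4\approx c^2/d$, which exceeds your claimed order by the factor $d/c^2$, and truncation cannot remove it, since it is produced by the typical values $S_\mu\in\{0,1\}$ (for $c^2/d\ll 1$ the noise is essentially a Bernoulli count), not by rare large overlaps. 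Bernstein then yields $\exp(-\Theta(c^2d/k))$ rather than $\exp(-\frac1{32}\frac{d^2}{k})$, and condition (2) no longer closes the union bound. Nor can any other method do better there: with $c^2\ll k$ the deviation lies in the moderate-deviation regime where the variance rate is the true rate; concretely, for $c=d^{0.2}$, $k=d^{1.5}$, $q=2$ all stated hypotheses hold, yet the signal $c^2$ is $o(1)$ times the noise standard deviation $\approx c\sqrt{2k/d}$, so the error probability tends to $1/2$. Symmetrically, for $c\gg\sqrt d$ the term $4m^2\sigma^2\approx4c^6/d^3$ --- precisely the contribution the paper isolates as its linear part --- dominates and caps the rate at $\Theta(d^3/(kc^2))$. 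So your route (and indeed the theorem's rate) works exactly when $c\asymp\sqrt d$; your ``morally $c\lesssim\sqrt d$'' must be two-sided. In fairness, the paper buries the same requirement: its claimed Lindeberg error $O(t^2/\sqrt c)$ overlooks that the standardized binomial has fourth moment $3+d/c^2$ (so its Gaussian comparison needs $c\gg\sqrt d$), while its linear part is negligible at the claimed rate only for $c\lesssim\sqrt d$. Your one-formula variant has the merit of exposing this hidden assumption, but as written the variance step is false and the proposal does not establish the theorem under the conditions you state.
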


\begin{proof}
To facilitate notation assume that $x^0=x^1$ (which is of course unknown to the algorithm), that $x^1 \in \X^1$,  and that $x_i^1= 1$ for the first $\tc$ indices $i$ and that $x_i^1= 0$ for the others. Note that for any $\vep>0$, with high probability for $c$ and $d$ large, we are able to assume that $\frac{\tc} d \in \left(\frac c d(1-\vep), \frac cd (1+\vep)\right)$.
Then for any $i$ we have
$
s(\X^i, x^0)= \sum_{\mu: x^\mu \in \X^i}\sum_{l,m=1}^\tc x_l^\mu x_m^\mu
$
and in particular
$$
s(\X^1, x^0)= \tc^2+\sum_{\mu: x^\mu \in \X^1\atop \mu \neq 1}\sum_{l,m=1}^\tc x_l^\mu x_m^\mu.
$$
We now want to show that for a certain range of parameters $n,k,$ and $d$ this algorithm works reliably, i.e. we want to show that
$$
\P(\exists i \ge 2: s(\X^i, x^0)\ge s(\X^1, x^0)) \to 0 \qquad \mbox{as}\quad   d \to \infty.
$$
Now since, $c$ and $\tc$ are close together when divided by $d$ we will replace $\tc$ by $c$ everywhere without changing the proof.
Trivially,
$$
\P(\exists i \ge 2: s(\X^i, x^0)\ge s(\X^1, x^0)) \le q
\P(s(\X^2, x^0)\ge s(\X^1, x^0))
$$
and we just need to bound the probability on the right hand side. Taking $X_\mu$ to be i.i.d. $B(c,\frac cd)$-distributed random variables, we may rewrite the probability on the right hand side as
$$
\P(s(\X^2, x^0)\ge s(\X^1, x^0))= \P(\sum_{\mu=1}^k X_\mu^2-\sum_{\mu=k+1}^{2k-1} X_\mu^2 \ge c^2).
$$
Centering the variables we obtain
\begin{eqnarray*}
&&\P(\sum_{\mu=1}^k X_\mu^2-\sum_{\mu=k+1}^{2k-1} X_\mu^2 \ge c^2)\\
&=&
\P(\sum_{\mu=1}^k (X_\mu-\frac{c^2}d)^2-\sum_{\mu=k+1}^{2k-1} (X_\mu-\frac{c^2}d)^2+\\&&
2 \frac{c^2}d(\sum_{\mu=1}^{k} X_\mu-\sum_{\mu=k+1}^{2k-1} X_\mu)-\frac{c^4}{d^2} \ge c^2)
\end{eqnarray*}
Now as $c \ll d$ the term $\frac{c^4}{d^2}$ is negligible with respect to $c^2$ and therefore up to terms of vanishing order
\begin{eqnarray*}
&&\P(\sum_{\mu=1}^k X_\mu^2-\sum_{\mu=k+1}^{2k-1} X_\mu^2 \ge c^2)\\
&\le&
\P(\sum_{\mu=1}^k (X_\mu-\frac{c^2}d)^2-\sum_{\mu=k+1}^{2k-1} (X_\mu-\frac{c^2}d)^2 \ge \frac {c^2}2)+\\&&
\P(\sum_{\mu=1}^{k} X_\mu-\sum_{\mu=k+1}^{2k-1} X_\mu \ge \frac d4)
\end{eqnarray*}
Let us treat the last term first. For $t>0$
\begin{eqnarray*}
&&\P(\sum_{\mu=1}^{k} X_\mu-\sum_{\mu=k+1}^{2k-1} X_\mu \ge \frac d4)\\&\le&
e^{-t\frac d4}(\E e^{tX_1})^k(\E e^{-tX_1})^{k-1}\\
&=& e^{-t\frac d4}\left(1+\frac cd (e^t-1)\right)^{kc}\left(1+\frac cd (e^{-t}-1)\right)^{(k-1)c}\\
&=& e^{-t\frac d4}\left(1+\frac cd (t+\frac {t^2} 2+\frac {t^3}6+\mathcal{O}(t^4))\right)^{kc}\\&&
\;\left(1+\frac cd (-t+\frac {t^2} 2-\frac {t^3}6+\mathcal{O}(t^4))\right)^{(k-1)c}\\
&\le&  e^{-t\frac d4}\exp\left(\frac {c^2}d t+k\frac{c^2}d t^2+ \frac {c^2}{3d} t^3+\mathcal{O}(\frac{c^2}d k t^4)\right)
\end{eqnarray*}
Now we take $t=\vep_d \sqrt[4]{\frac{d}{kc^2}}$, which implies that
\begin{eqnarray*}
&&\P(\sum_{\mu=1}^{k} X_\mu-\sum_{\mu=k+1}^{2k-1} X_\mu \ge \frac d4)\lesssim e^{-\mathrm{const.} \vep_d\sqrt[4]{\frac{d^{17}}{kc^2}}}
\end{eqnarray*}
if we take $\vep_d \to 0$ for $ d\to \infty$. This basically always works, if $k \le d^{15}$.

The other summand is slightly more complicated: Note that $\V (X_\mu-\frac{c^2}d)\approx \frac {c^2}d$; thus we compute
for $t>0$
\begin{eqnarray*}
&&\P(\sum_{\mu=1}^k (X_\mu-\frac{c^2}d)^2-\sum_{\mu=k+1}^{2k-1} (X_\mu-\frac{c^2}d)^2 \ge \frac {c^2}2)\\
&=&
\P\left(\sum_{\mu=1}^k \left(\frac{X_\mu-\frac{c^2}d}{\sqrt{\frac {c^2}d}}\right)^2-\sum_{\mu=k+1}^{2k-1} \left(\frac{X_\mu-\frac{c^2}d}{\sqrt{\frac{c^2}d}}\right)^2 \ge \frac d 2\right)\\
&\le & e^{-t\frac d 2} \left(\E e^{t \left(\frac{X_\mu-\frac{c^2}d}{\sqrt{\frac{c^2}d}}\right)^2}\right)^k
 \left(\E e^{-t \left(\frac{X_\mu-\frac{c^2}d}{\sqrt{\frac{c^2}d}}\right)^2}\right)^{k-1}
\end{eqnarray*}
where on the right hand side we can take any fixed $\mu$.

To estimate the expectations we will apply a version of Lindeberg's replacement trick \cite{Lindeberg:1920} or \cite{EiLo_linde}. To this end assume that $S_c:=\frac{X_\mu-\frac{c^2}d}{\sqrt{\frac {c^2}d}}\stackrel{d}{=}\sigma_1+\ldots+ \sigma_c$, for appropriately centered i.i.d. Bernoulli random variables $\sigma_k$ with variance 1. Moreover, let
$\xi = \frac 1 {\sqrt c} \sum_{i=1}^c \xi_i$, where the $(\xi_i)$ are i.i.d. standard Gaussian random variables. Finally set
$$
T_k := \frac 1 {\sqrt c}(\sigma_1 + \cdots + \sigma_{k-1} + \xi_{k+1} + \cdots + \xi_c)
$$
and $f(x)=e^{t x^2}$. Then we obtain by Taylor expansion
\begin{eqnarray*}
&&\left| \E \left( f\left(S_c\right) - f(\xi) \right) \right| \leq \\&& \sum_{k=1}^c \bigg|
 \E \biggl[ f \bigl(  \frac{1}{\sqrt{c}} (T_k + \sigma_k) \bigr) - f \bigl(\frac{1}{{\sqrt c}} (T_k + \xi_k) \bigr) \biggr]\biggr| \\
 & \le  &\sum_{k=1}^c \bigg| \E \biggl[ f' \bigl( \frac{T_k}{\sqrt{c}} \bigr) \, \frac{1}{\sqrt{c}} (\sigma_k - \xi_k)  + \frac 12 f'' \bigl( \frac{T_k}{\sqrt{c}} \bigr) \, \frac 1c (\sigma_k^2 - \xi_k^2) \biggr] \bigg|\\
 &&+ \frac 1 {6c^{3/2}}\sum_{k=1}^c \bigg| \E\biggl[ f^{(3)}(\zeta_1) \, \sigma_k^3 - \E f^{(3)}(\zeta_2) \, \xi_k^3 \biggr] \bigg|,
 \end{eqnarray*}
where $\zeta_1$ and $\zeta_2$ are random variable that lie within the interval $[T_k, T_k +\sigma_k]$, and $[T_k, T_k +\xi_k]$, respectively (possibly with the right and left boundary interchanged, if $\sigma_k$ or $\xi_k$ are negative).
First of all observe that for each $k$ the random variable $T_k$ is independent of the $\xi_k$ and $\sigma_k$ and $\xi_k$ and $\sigma_k$ have matching first and second moments. Therefore
$$
\sum_{k=1}^c \bigg| \E \biggl[ f' \bigl( \frac{T_k}{\sqrt{c}} \bigr) \, \frac{1}{\sqrt{c}} (\sigma_k - \xi_k)  + \frac 12 f'' \bigl( \frac{T_k}{\sqrt{c}} \bigr) \, \frac 1c (\sigma_k^2 - \xi_k^2) \biggr] \bigg|=0.
$$
For the second term on the right hand side we compute
$$f^{(3)}(x)=(12t^2x+8t^3x^3)f(x).$$
Observe that
$ \E [(\xi)^\nu e^{t \xi^2}] < \infty$ for $\nu=0,1,2,3$ for $t$ small enough and the expectation $\E\bigl[ \left(\frac{X_\mu-\frac{c^2}d}{\sqrt{\frac {c^2}d}}\right)^\nu f\left(\frac{X_\mu-\frac{c^2}d}{\sqrt{\frac {c^2}d}}\right)\bigr] $ is uniformly bounded in the number of summands $c$ for $\nu=0,1,2,3$ and if $t$ is small enough.
Finally the $\zeta_1$ and $\zeta_2$ have the form $T_k+ \alpha_k \frac{\xi_k}{\sqrt c}+\beta_k \sigma_k$ for some coefficients $\alpha_k$ and $\beta_k$. Therefore $ \bigg| \E\biggl[ f^{(3)}(\zeta_1) \, \sigma_k^3 - \E f^{(3)}(\zeta_2) \, \xi_k^3 \biggr] \bigg|$ is of order $t^2$ and
$$
\frac 1 {6c^{3/2}}\sum_{k=1}^c \bigg| \E\biggl[ f^{(3)}(\zeta_1) \, \sigma_k^3 - \E f^{(3)}(\zeta_2) \, \xi_k^3 \biggr] \bigg|=
\mathcal{O}(\frac {t^2} {\sqrt c}).$$

Moreover, we have for $t$ smaller than $1/2$ by Gaussian integration
\begin{equation}\label{Gauss}
\E[  f(\xi)]=\E[e^{t \xi^2}]=\frac 1{\sqrt{1-2t}}.
\end{equation}
Therefore, writing
$$\E[ f(S_c)] \leq \bigl| \E[ f(\xi)]\bigr|+ \big|\E[f(S_c) - f(\xi)] \bigr|  ,$$
we obtain
\begin{eqnarray*}
 &&\left(\E [ e^{t \frac{X_\mu^2-\frac{c^2}d}{\sqrt{\frac{c^2}d}}}]\right)^k \le\\ &&e^{-\frac k2 \log(1-2t)} \biggl(1+e^{\frac 12 \log(1-2t)}\mathcal{O}(\frac {t^2} {\sqrt c})\biggr)^k.\\
\end{eqnarray*}
In a similar fashion we can also bound:
\begin{eqnarray*}
 &&\left(\E [ e^{-t \frac{X_\mu^2-\frac{c^2}d}{\sqrt{\frac{c^2}d}}}]\right)^{k-1} \le\\&& e^{-\frac {k-1}2 \log(1+2t)} \biggl(1+e^{\frac 12 \log(1+2t)}\mathcal{O}(\frac {t^2} {\sqrt c})\biggr)^{k-1},\\
\end{eqnarray*}
where now we make us of $\E[e^{-t \xi^2}]=\frac 1{\sqrt{1+2t}}$ instead of \eqref{Gauss}.

Hence altogether we obtain
\begin{eqnarray*}
&&\P(\sum_{\mu=1}^k (X_\mu-\frac{c^2}d)^2-\sum_{\mu=k+1}^{2k-1} (X_\mu-\frac{c^2}d)^2 \ge \frac {c^2}2)
\quad \displaystyle \le\\&&  e^{-t\frac d 2} e^{2kt^2+\mathcal{O}(t+\frac {kt^2} {\sqrt c}+ kt^4)}.
\end{eqnarray*}
 Note that we will always assume that $c \to \infty$ such that $\frac {kt^2} {\sqrt c}$ is negligible with respect to $kt^2$. Choosing $t= \frac {d}{8k}$, which in particular ensures that $t$ converges to $0$ for large dimensions and therefore especially that \eqref{Gauss} can be applied,
 we obtain that asymptotically
$$
\P(\sum_{\mu=1}^k (X_\mu-\frac{c^2}d)^2-\sum_{\mu=k+1}^{2k-1} (X_\mu-\frac{c^2}d)^2 \ge \frac {c^2}2)\le e^{-\frac 1{32} \frac {d^2}{k}}
$$
which goes to 0, whenever $d^2 \gg k$. Hence for $ d \ll k \ll d^2$ we obtain that the method works fine if
$q \ll e^{\frac 1{32} \frac {d^2}{k}}$, which is our last condition.
\end{proof}

In a very similar fashion we can treat the case of an input pattern that is a corrupted version of one of the vectors in the database.

\begin{corollary}\label{cor_sparse}
Assume that the input pattern $x^0$ has a macroscopic overlap with one of the patterns in the database, i.e. $\sum_{l=1}^d x_l^0 x_l^\mu=\alpha c$, for a $\alpha \in (0,1)$, and $x^0$ has $c$ entries equal to $1$ and $d-c$ indices equal to $0$. The algorithm described above works asymptotically correctly, i.e. with an error probability that converges to 0 and is efficient if
\begin{enumerate}
\item  $d \ll k \ll d^2$, i.e. $\frac k{d^2} \to 0$ and $\frac k{d} \to \infty$,
\item and $q e^{-\frac {\alpha^4} {32} \frac{d^2}k} \to 0$.
\end{enumerate}
\end{corollary}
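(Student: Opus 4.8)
The plan is to recognise the corollary as Theorem~\ref{theo_sparse} with a single scalar replaced, and then to track how that scalar propagates through the argument. Write $O_\mu := \sum_{l=1}^d x_l^0 x_l^\mu$ for the overlap of the query with pattern $x^\mu$, so that $s(\X^i, x^0)=\sum_{\mu:\,x^\mu\in\X^i} O_\mu^2$. Assume without loss of generality that the pattern with macroscopic overlap lies in $\X^1$ and carries the index $\mu=1$, so that $O_1=\alpha c$. The decisive observation is that for every non-matching index $\mu$ the pattern $x^\mu$ is independent of $x^0$, and since $x^0$ has \emph{exactly} $c$ entries equal to $1$, the overlap $O_\mu=\sum_{l:\,x_l^0=1} x_l^\mu$ is a sum of $c$ independent Bernoulli$(c/d)$ variables. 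Hence the non-matching overlaps are still i.i.d. $B(c,c/d)$, exactly as in the theorem; only the signal term changes, $s(\X^1,x^0)$ now containing $(\alpha c)^2=\alpha^2 c^2$ in place of $\tc^2\approx c^2$. After the same union bound over the $q$ classes we are reduced to bounding $\P\bigl(\sum_{\mu=1}^k X_\mu^2-\sum_{\mu=k+1}^{2k-1} X_\mu^2\ge \alpha^2 c^2\bigr)$ with $(X_\mu)$ i.i.d.\ $B(c,c/d)$, i.e.\ the theorem with the threshold $c^2$ replaced by $\alpha^2 c^2$.

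Next I would center the squares via $X_\mu^2=(X_\mu-\tfrac{c^2}{d})^2+2\tfrac{c^2}{d}X_\mu-\tfrac{c^4}{d^2}$. The correction $c^4/d^2$ remains negligible relative to $\alpha^2 c^2$, because $\alpha\in(0,1)$ is a fixed constant and $c/d\to 0$. I would then split the threshold in the same proportions as before: one event for the centered squares with threshold $\alpha^2 c^2/2$, and one for the linear part, $\sum_{\mu=1}^k X_\mu-\sum_{\mu=k+1}^{2k-1}X_\mu\ge \alpha^2 d/4$, the latter threshold being the theorem's $d/4$ rescaled by $\alpha^2$ (it is obtained by dividing $\alpha^2 c^2/2$ by $2c^2/d$). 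The linear part is then controlled by the exact exponential-moment estimate of the theorem; the factor $\alpha^2$ only alters constants, so the resulting stretched-exponential bound is again negligible over the same admissible range of $k$ and is not the bottleneck.

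The main term is the centered-square probability. After normalising by the variance $c^2/d$, its threshold becomes $\alpha^2 d/2$, and I would reuse the Lindeberg replacement argument verbatim, since the underlying variables $X_\mu$ are unchanged, to obtain a bound of the form $e^{-t\alpha^2 d/2}\,e^{2kt^2+\mathcal{O}(\cdots)}$. Optimising the tilt now gives $t=\alpha^2 d/(8k)$, whence the exponent evaluates to $-\tfrac{\alpha^4}{32}\tfrac{d^2}{k}$, and combined with the union bound, condition~(2), namely $q\,e^{-\frac{\alpha^4}{32}\frac{d^2}{k}}\to 0$ under $d\ll k\ll d^2$, yields the claim. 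The only step that genuinely requires verification rather than copying is the invariance of the non-matching overlap law under corruption of the query, which holds precisely because $x^0$ is assumed to have exactly $c$ ones; this is what keeps the Lindeberg machinery applicable without modification, and everything else is the bookkeeping of carrying the factor $\alpha^2$ through the optimisation, where it becomes $\alpha^4$.
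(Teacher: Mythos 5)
Your proposal is correct and follows essentially the same route as the paper's own proof: the same reduction to i.i.d.\ $B(c,c/d)$ overlaps (the paper leaves your explicit ``exactly $c$ ones'' observation implicit by saying the non-matching scores are structurally unchanged), the same split into a centered-square term with threshold $\alpha^2 c^2/2$ and a linear term with threshold $\alpha^2 d/4$, and the same optimised tilt $t=\alpha^2 d/(8k)$ producing the exponent $-\frac{\alpha^4}{32}\frac{d^2}{k}$.
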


\begin{proof}
The proof is basically a rerun of the proof of Theorem \ref{theo_sparse}. Again without loss of generality assume that $x^1$ is our target pattern in the database such that $\sum_{l=1}^d x_l^0 x_l^1=\alpha c$, that $x^1 \in \X^1$,  and that $x_i^0= 1$ for the first $c$ indices $i$ and that $x_i^0= 0$ for the others.  Then for any $i$ we have
$$
s(\X^1, x^0)= \alpha^2 c^2+\sum_{\mu: x^\mu \in \X^1\atop \mu \neq 1}\sum_{l,m=1}^c x_l^\mu x_m^\mu
$$
while $s(\X^i, x^0)$ is structurally the same as in the previous proof. 
Therefore, following the lines of the proof of Theorem \ref{theo_sparse} and using the notation introduced there we find that
\begin{eqnarray*}
&&\P(\exists i \ge 2: s(\X^i, x^0)\ge s(\X^1, x^0)) \\& \le& q
\P(s(\X^2, x^0)\ge s(\X^1, x^0))\\
&=&
q \P(\sum_{\mu=1}^k (X_\mu-\frac{c^2}d)^2-\sum_{\mu=k+1}^{2k-1} (X_\mu-\frac{c^2}d)^2 \ge \frac {\alpha^2 c^2}2)+\\&&
\; \; \; q\P(\sum_{\mu=1}^{k} X_\mu-\sum_{\mu=k+1}^{2k-1} X_\mu \ge \frac {\alpha^2 d}4)
\end{eqnarray*}
Again the second summand vanishes as long as $k \ll d^{15}$ while for the first summand we obtain exactly as in the previous proof
$$  
q\P(\sum_{\mu=1}^{k} X_\mu-\sum_{\mu=k+1}^{2k-1} X_\mu \ge \frac {\alpha^2 d}4) \le q  e^{-t \alpha^2 \frac d 2} e^{2kt^2+\mathcal{O}(t+\frac {kt^2} {\sqrt c}+ kt^4)}.
$$
Now we choose $t= \frac{\alpha^2 d}{8k}$ to conclude as in the proof of Theorem \ref{theo_sparse} that
$$
q\P(\sum_{\mu=1}^{k} X_\mu-\sum_{\mu=k+1}^{2k-1} X_\mu \ge \frac {\alpha^2 d}4) \le q  e^{- \frac{\alpha^4 d^2}{32k} }
$$
which converges to 0 by assumption.
\end{proof}

\begin{rem}
 Erased or extremely corrupted patterns (i.e. $\alpha \to 0$) can also be treated.
\end{rem}

\section{Dense, unbiased patterns}
\label{dense}

Contrary to the previous section we will now treat the situation where the patterns are not sparse and do not have a bias. This is best modeled by choosing $x^\mu \in \{-1,1\}^d$ for some (large) $d$. We will now assume that the $x^\mu, \mu=1, \ldots, n$ are i.i.d. and have i.i.d. coordinates such that
$$
\P( x_l^\mu= 1)= \frac 12 = \P(x_l^\mu=-1)
$$
for all $l= 1, \ldots ,  d$ and all $\mu=1, \ldots ,n$.
Again we will suppose that there is an $x^\mu$ such that $d_H(x^\mu, x^0) $ is macroscopically large. To avoid technical problems we start with the situation that $x^0=x^1$.
Again we will also partition the patterns into equal-sized classes $\X^1, \ldots \X^q$, with $|\X^i|=k$ for all $i= 1, \ldots q$ and compute
$$
s(\X^i, x^0)= \sum_{\mu: x^\mu \in \X^i}\sum_{l,m=1}^d x_l^0 x_m^0 x_l^\mu x_m^\mu
$$
to compute overlap of a class $i$ with the query pattern.
We will prove:

\begin{theorem}\label{theo_dense}
Assume that the query pattern is equal to one of the patterns in the database. The algorithm described above works asymptotically correctly and is efficient if
\begin{enumerate}
\item  $d \ll k \ll d^2$, i.e. $\frac k{d^2} \to 0$ and $\frac k{d} \to \infty$,
\item and $q e^{-\frac 1 {8} \frac{d^2}k} \to 0$ if $d^4 \ll k^3$,
\item and $q e^{- \frac{d^2}{k^\frac 54}} \to 0$ if $k \le C d^{\frac 43}$, for some $C>0$.
\end{enumerate}
\end{theorem}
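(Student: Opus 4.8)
The plan is to mirror the proof of Theorem \ref{theo_sparse}, exploiting the fact that the double sum collapses to a single square: since $\sum_{l,m=1}^d x_l^0 x_m^0 x_l^\mu x_m^\mu=\bigl(\sum_{l=1}^d x_l^0 x_l^\mu\bigr)^2$, we may write $s(\X^i,x^0)=\sum_{\mu:x^\mu\in\X^i}O_\mu^2$ with the overlaps $O_\mu:=\sum_{l=1}^d x_l^0 x_l^\mu$. For the target index (say $x^0=x^1\in\X^1$) we have $O_1=d$, while for every $\mu\neq 1$ the product $x_l^0 x_l^\mu$ is a symmetric $\pm1$ variable, so $O_\mu$ is a sum of $d$ i.i.d. Rademacher variables with $\E O_\mu=0$ and $\V O_\mu=d$. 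Hence $s(\X^1,x^0)=d^2+\sum_{\mu\in\X^1,\mu\neq1}O_\mu^2$ while $s(\X^i,x^0)=\sum_{\mu\in\X^i}O_\mu^2$ for $i\ge 2$.

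First I would reduce, by the same union bound over the at most $q-1$ competing classes, to estimating $\P(s(\X^2,x^0)\ge s(\X^1,x^0))=\P\bigl(\sum_{\mu=1}^k O_\mu^2-\sum_{\mu=k+1}^{2k-1}O_\mu^2\ge d^2\bigr)$, where the $O_\mu$ are i.i.d. sums of $d$ Rademacher variables. Standardising $Y_\mu:=O_\mu/\sqrt d$ (so $\V Y_\mu=1$) turns the threshold $d^2$ into $d$, and an exponential Markov inequality with parameter $t>0$ gives the bound $e^{-td}\,(\E e^{tY^2})^k(\E e^{-tY^2})^{k-1}$, exactly the structure met in the sparse case.

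The heart of the matter is to estimate the moment generating function of $Y^2$ by the Gaussian ($\chi^2$) value $(1-2t)^{-1/2}$. Two routes are available. One is Lindeberg's replacement trick as used above; here it is \emph{cheaper}, because Rademacher variables match the Gaussian not only in the first two but also in the third moment, so the replacement error per factor is $\mathcal O(t^2/d)$ rather than $\mathcal O(t^2/\sqrt d)$. The other, cleaner route is the Hubbard--Stratonovich identity $\E e^{tY^2}=\E_Z\cosh(\sqrt{2t/d}\,Z)^d$ with $Z$ standard normal, combined with $\cosh u\le e^{u^2/2}$, which yields the exact inequality $\E e^{tY^2}\le(1-2t)^{-1/2}$ for $t<1/2$; the companion factor is controlled by the matching expansion $\E e^{-tY^2}\le(1+2t)^{-1/2}(1+o(1))$. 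Either way one reaches $\P(s(\X^2,x^0)\ge s(\X^1,x^0))\le e^{-td}e^{2kt^2+\mathcal O(t+kt^4+kt^2/d)}$.

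It remains to choose $t$ and read off the two regimes. Minimising the leading part $-td+2kt^2$ gives $t=d/(4k)$ and value $-d^2/(8k)$; this choice is admissible precisely when the higher-order term $kt^4=d^4/(256k^3)$ stays negligible, i.e. when $d^4\ll k^3$, which yields condition (2) after multiplying by $q$. When instead $k\le Cd^{4/3}$ the optimal $t$ would let $kt^4$ compete with the main term, so I would retreat to the safe choice $t=d/k^{5/4}$: for it every error term is manifestly $o(td)$, the bound collapses to $e^{-td}=e^{-d^2/k^{5/4}}$, and multiplying by $q$ gives condition (3). Efficiency is inherited from $k\ll d^2$ exactly as in Theorem \ref{theo_sparse}. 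I expect the main obstacle to be the uniform control of the MGF estimate across the $k$-fold product — in particular the negative factor $\E e^{-tY^2}$, for which the clean $\cosh$ bound is unavailable — together with the bookkeeping that keeps the positive errors $2kt^2$ and $kt^4$ from overwhelming the gain $-td$; it is exactly this tension that forces the split into the two parameter ranges.
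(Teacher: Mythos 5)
Your proposal is correct, and its skeleton coincides with the paper's proof: the same union bound over the $q-1$ competing classes, the same exponential Chebychev inequality giving $e^{-td^2}\,(\E e^{t(\sum_l x_l^\mu)^2})^k\,(\E e^{-t(\sum_l x_l^\nu)^2})^{k-1}$, the same Hubbard--Stratonovich/$\cosh$ bound $(1-2td)^{-1/2}$ for the positive factor, and the same two choices of $t$ in the two regimes (your $t=d/(4k)$ and $t=d/k^{5/4}$ are exactly the paper's $t=1/(4k)$ and $t=k^{-5/4}$ after your normalization $Y_\mu=O_\mu/\sqrt d$), leading to the identical exponents $-d^2/(8k)$ when $d^4\ll k^3$ and $-d^2/k^{5/4}$ when $k\le Cd^{4/3}$. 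Where you genuinely depart from the paper is the negative factor. The paper linearizes it with an imaginary Gaussian, $\E e^{-t(\sum_l x_l)^2}=\E_y\bigl[\cos(\sqrt{2t}\,y)^d\bigr]$, splits the $y$-integral at $\varepsilon_d/(t^2d)^{1/4}$, and must calibrate $\varepsilon_d=k^{-1/4}\tilde\varepsilon_d$ together with the extra condition $-\frac{k^{-1/4}\tilde\varepsilon_d^2}{2t\sqrt d}+\log k\to-\infty$ (then verified for both choices of $t$) to keep the $(k-1)$-fold product under control. Your Lindeberg alternative is sound and arguably cleaner: Rademacher and Gaussian variables match in the first three moments, and all derivatives of $x\mapsto e^{-tx^2}$ up to fourth order are uniformly of size $O(t^2)$, so the replacement costs $O(t^2/d)$ per moment-generating-function factor, hence a harmless $e^{O(kt^2/d)}$ over the product, which is $o(1)$ for both choices of $t$ since $d\ll k$; this removes the paper's $\varepsilon_d$ bookkeeping entirely. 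Two caveats. First, your ``cleaner route'' as literally stated, $\E e^{-tY^2}\le(1+2t)^{-1/2}(1+o(1))$, is not usable on its own: an unquantified $o(1)$ per factor says nothing after raising to the power $k-1$; you need the quantitative $O(t^2/d)$ (i.e.\ $o(1/k)$) error that only the Lindeberg computation supplies, so that route is the one to actually carry out. Second, if you were to run Lindeberg on the positive factor as well (your ``either way''), you would face unbounded derivatives of $e^{+tx^2}$ and need the uniform-integrability care of the sparse proof; better to keep the exact $\cosh$ bound there, as both you (in your second route) and the paper do. With these points settled, your bound $e^{-td}e^{2kt^2+\mathcal{O}(t+kt^4+kt^2/d)}$ and the two-regime optimization reproduce conditions (2) and (3) of the theorem, and efficiency follows from $d\ll k$ as in the sparse case.
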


\begin{proof}
Without loss of generality, we may again suppose that $x^0 =x^1$ and now that $x^0_l=x^1_l \equiv 1$ for all $l=1, \ldots , d$, since otherwise we can flip the spins of all coordinates and all images, where this is not the case (recall that the $x_l^\mu$ are all unbiased i.i.d.). Then the above expression simplifies to
$$
s(\X^i, x^0)= \sum_{\mu: x^\mu \in \X^i}\sum_{l,m=1}^d x_l^\mu x_m^\mu.
$$
Especially
$$
s(\X^1, x^0)= d^2+\sum_{\mu: x^\mu \in \X^1\atop \mu \neq 1}\sum_{l,m=1}^d x_l^\mu x_m^\mu.
$$
Again we want to know for which parameters
$$
\P(\exists i \ge 2: s(\X^i, x^0)\ge s(\X^1, x^0)) \to 0 \qquad \mbox{as}\quad   d \to \infty.
$$
Trivially,
$$
\P(\exists i \ge 2: s(\X^i, x^0)\ge s(\X^1, x^0)) \le q
\P(s(\X^2, x^0)\ge s(\X^1, x^0)).
$$
By the exponential Chebychev inequality we obtain for any~$t>0$
\begin{eqnarray*}
&&\P(s(\X^2, x^0)\ge s(\X^1, x^0)) \\&=& \P( \sum_{\mu: x^\mu \in \X^2}\sum_{l,m=1}^d x_l^\mu x_m^\mu-
\sum_{\nu: x^\nu \in \X^1\atop \nu \neq 1}\sum_{l,m=1}^d x_l^\nu x_m^\nu \ge d^2)\\
&\le& e^{-t d^2}(\E \exp(t \sum_{l,m=1}^d x_l^\mu x_m^\mu))^k
\\&&\; \; \; \; \; \; (\E\exp(-t  \sum_{l,m=1}^d x_l^\nu x_m^\nu))^{k-1}.
\end{eqnarray*}
To calculate the expectations, we introduce a standard normal random variable $y$ that is independent of all other random variables occuring in the computation and by Gaussian expectation and Fubini property
\begin{eqnarray*}
&&\E \exp(t \sum_{l,m=1}^d x_l^\mu x_m^\mu)\\ &=& \E \exp(t(\sum_{l=1}^d x_l^\mu)^2)= \E_x \E_y e^{\sqrt{2t}y (\sum_{l=1}^d x_l^\mu)}\\
&=& \E_y \E_x e^{\sqrt{2t}y (\sum_{l=1}^d x_l^\mu)}\\
&=& \E_y \prod_{l=1}^d \E_{x^\mu_l} e^{\sqrt{2t}y x_l^\mu}\\
&=& \E_y \prod_{l=1}^d \cosh(\sqrt{2t} y)\\
&\le & E_y e^{dt y^2} =\frac 1 {\sqrt{1-2td}}
\end{eqnarray*}
if $dt< \frac 12$. Here we used the well-known estimate $\cosh(x) \le e^{\frac{x^2}2}$ for all $x$. Thus
$$
(\E \exp(t \sum_{l,m=1}^d x_l^\mu x_m^\mu))^k \le \exp(-\frac k2 \log(1-2td)).
$$
On the other hand, similary to the above, again by introducing a standard Gaussian random variable $y$ we arrive at
\begin{eqnarray*}
\E \exp(-t \sum_{l,m=1}^d x_l^\mu x_m^\mu) &=&  \E_x \E_y e^{i\sqrt{2t}y (\sum_{l=1}^d x_l^\mu)}\\
&=& \E_y \E_x e^{i\sqrt{2t}y (\sum_{l=1}^d x_l^\mu)}\\
&=& \E_y [\cos(\sqrt{2t} y)^d]\\
\end{eqnarray*}
To compute the latter we write for some $\vep_d >0$, that converges to zero if $d \to \infty$ at a speed to be chosen later
\begin{eqnarray*}
&&E_y [\cos(\sqrt{2t} y)^d]\\ &=& \int_{y \le  \frac{\vep_d}{(t^2d)^{1/4}}}\cos(\sqrt{2t} y)^d d\P+\int_{y >  \frac{\vep_d}{(t^2d)^{1/4}}}\cos(\sqrt{2t} y)^d d\P\\
&\le & \int_{y \le  \frac{\vep_d}{(t^2d)^{1/4}}} (1-ty^2+\mathcal{O}(t^2 y^4))^d d\P+\P(y >  \frac{\vep_d}{(t^2d)^{1/4}})\\
&\le& \int_{y \le  \frac{\vep_d}{(t^2d)^{1/4}}} e^{-tdy^2+\mathcal{O}(({\vep_d})^4)}d\P+e^{-\frac{\vep^2_d}{2t \sqrt{d}}}\\
&\le& e^{\mathcal{O}(({\vep_d)}^4)}\int e^{-tdy^2}d\P+e^{-\frac{\vep^2_d}{2t \sqrt{d}}}\\
&\le& e^{\mathcal{O}(({\vep_d})^4)}\frac 1 {\sqrt{1+2td}} +e^{-\frac{\vep^2_d}{2t \sqrt{d}}},\\
\end{eqnarray*}
where we used there the well-known estimate $\P[y\ge u] \le e^{-\frac{u^2}2}$ for all $u\ge 0$ and
$\E[e^{-td y^2}]=\frac 1{\sqrt{1+2td}}.$
Thus
\begin{eqnarray*}
&&(\E \exp(-t \sum_{l,m=1}^d x_l^\mu x_m^\mu)) ^{k-1}\\&\le &(e^{\mathcal{O}((\vep_d)^4)}\frac 1 {\sqrt{1+2td}} +e^{-\frac{\vep^2_d}{2t \sqrt{d}}})^{k-1}\\
&\le& e^{\mathcal{O}(k(\vep_d)^4)}  e^{-\frac{k-1} 2 \log (1+2td)}
\\&& \; \; \; \; \; \; (1+e^{\mathcal{O}(k(\vep_d)^4)}  e^{\frac{1} 2 \log (1+2td)} e^{-\frac{\vep^2_d}{2t \sqrt{d}}})^{k-1}\\
&\le& e^{\mathcal{O}(k(\vep_d)^4)}  e^{-\frac{k-1} 2 \log (1+2td)}
\\&& \; \; \; \; \; \; \exp(k e^{\mathcal{O}(k(\vep_d)^4)}  e^{\frac{1} 2 \log (1+2td)} e^{-\frac{\vep^2_d}{2t \sqrt{d}}})
\end{eqnarray*}
 We first choose $\vep_d=k^{-1/4}{\tilde\vep}_d$ for some small ${\tilde\vep}_d$ converging to $0$ as $d$ goes to infinity, such that the first factor on the right hand side converges to 1. 
Then we choose $k$ and $d$ such that the third term on the right hand side converges to one. This is true if
 $$\frac{1}{2} \log (1+2td) -\frac{\vep^2_d}{2t \sqrt{d}} +\log(k)\rightarrow -\infty.$$
 Anticipating our choices of $t$ and $d$ such that $dt$ goes to 0, we get the condition
 $-\frac{k^{-1/4}{\tilde\vep}^2_d}{2t \sqrt{d}} +\log(k)\rightarrow -\infty$ as $d$ and $k$ go to $0$.
 
 Now, we always suppose that $d \ll k$.
With these choices we obtain that 
\begin{eqnarray*}
&&\P(s(\X^2, x^0)\ge s(\X^1, x^0))\\
&\le&e^{-t d^2} e^{-\frac{k-1} 2 \log (1+2dt)} e^{-\frac{k} 2 \log (1-2dt)}\\&\approx&
e^{-t d^2} e^{-\frac{k} 2 \log (1-4d^2t^2)}
\end{eqnarray*}
Now we differentiate two cases. If $d^4 \ll k^3$ (note that we always have $d \ll k$) we expand the logarithm to obtain
$$
\P(s(\X^2, x^0)\ge s(\X^1, x^0))  \le e^{-t d^2 +2kd^2 t^2 +\mathcal{O}(kt^4 d^4)}
$$
Choosing $t=\frac 1{4k}$ we see that the term $\mathcal{O}(kt^4 d^4)$  is in fact $o(1)$ and therefore
$$
\P(s(\X^2, x^0)\ge s(\X^1, x^0))  \le e^{-\frac{ d^2}{8k} }
$$
Thus, if $k \ll d^2$ and $q \ll e^{\frac{ d^2}{8k} }$ we obtain
\begin{eqnarray*}
\P(\exists i \ge 2 : s(\X^i, x^0)\ge s(\X^1, x^0))  \to 0.
\end{eqnarray*}
On the other hand, if $d \ll k \le  \mathcal{O}(d^{4/3})$ we choose $t=k^{-5/4}$. Then by the same expansion of the logarithm
$$
\P(s(\X^2, x^0)\ge s(\X^1, x^0))  \le \exp\left(-\frac{d^2}{k^{\frac 54}} +2\frac{d^2}{k^{\frac 32 } } +\mathcal{O}((\frac dk)^{4})\right)
$$
 Now the $-\frac{d^2}{k^{\frac 54}}$ will always dominate the  $\frac{d^2}{k^{\frac 32 } }$-term and clearly the $\mathcal{O}$-term is again $o(1)$.
Thus
$$
\P(s(\X^2, x^0)\ge s(\X^1, x^0))  \le \exp\left(-\frac{d^2}{k^{\frac 54}}\right)(1+o(1)).
$$
Since $k \le \mathcal{O}(d^{4/3})$ the exponent diverges and we see that as long as
$\log q \ll   \frac{d^2}{k^{\frac 54}}$ again
\begin{eqnarray*}
\P(\exists i\ge 2 : s(\X^i, x^0)\ge s(\X^1, x^0))  \to 0.
\end{eqnarray*}
We can easily check  that our first condition   $-\frac{k^{-1/4}{\tilde\vep}^2_d}{2t \sqrt{d}} +\log(k)\rightarrow -\infty$ is fulfilled in both cases~: $t=\frac 1{4k}$ and $t=k^{-5/4}$, if $d \ll k$.

\end{proof}

Again, in a similar way one can treat the case of an input pattern that is a corrupted version of one of the vectors in the database.

\begin{corollary}\label{cor_dense}
Assume that the input pattern $x^0$ has a macroscopic overlap with one of the patterns, say $x^1$, in the database, i.e. $\sum_{l=1}^d x_l^0 x_l^1=\alpha d$, for a $\alpha \in ( 0 ,1)$. The algorithm described above works asymptotically correctly, and is efficient if
\begin{enumerate}
\item  $d \ll k \ll d^2$, i.e. $\frac k{d^2} \to 0$ and $\frac k{d} \to \infty$,
\item and $q e^{-\frac 1 {8} \frac{\alpha^4 d^2}k} \to 0$ if $d^4 \ll k^3$,
\item and $q e^{- \frac{\alpha^4 d^2}{k^\frac 54}} \to 0$ if $k \le C d^{\frac 43}$ for some $C>0$.
\end{enumerate}
\end{corollary}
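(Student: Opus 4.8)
The plan is to mimic the proof of Theorem \ref{theo_dense} almost verbatim, the only structural change being that the ``signal'' term in $s(\X^1,x^0)$ is now $\alpha^2 d^2$ rather than $d^2$.

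First I would carry out the same normalization as before. After flipping the spins in every coordinate $l$ with $x_l^0=-1$ (for both $x^0$ and all patterns $x^\mu$), we may assume $x^0\equiv 1$. The overlap hypothesis then reads $\sum_{l=1}^d x_l^1=\alpha d$, so the target contributes $(\sum_l x_l^1)^2=\alpha^2 d^2$ to $s(\X^1,x^0)$, while all remaining patterns $x^\mu$, $\mu\neq 1$, stay i.i.d.\ unbiased. Consequently, writing $Y_\mu=\sum_{l=1}^d x_l^\mu$, the variables $(Y_\mu)_{\mu\neq 1}$ are i.i.d.\ sums of $d$ centred $\pm1$ variables --- exactly the variables appearing in Theorem \ref{theo_dense} --- and
$$
s(\X^1,x^0)=\alpha^2 d^2+\sum_{\mu:x^\mu\in\X^1,\ \mu\neq 1}Y_\mu^2,\qquad
s(\X^2,x^0)=\sum_{\mu:x^\mu\in\X^2}Y_\mu^2 .
$$

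Next, after the union bound $\P(\exists i\ge 2:s(\X^i,x^0)\ge s(\X^1,x^0))\le q\,\P(s(\X^2,x^0)\ge s(\X^1,x^0))$, I would apply the exponential Chebyshev inequality to
$$
\P\Bigl(\sum_{\mu:x^\mu\in\X^2}Y_\mu^2-\sum_{\mu:x^\mu\in\X^1,\ \mu\neq1}Y_\mu^2\ge \alpha^2 d^2\Bigr)
\le e^{-t\alpha^2 d^2}\bigl(\E e^{tY^2}\bigr)^{k}\bigl(\E e^{-tY^2}\bigr)^{k-1}.
$$
The two moment generating functions are identical to those estimated in the proof of Theorem \ref{theo_dense}; since that analysis (the Gaussian/Fubini step with $\cosh x\le e^{x^2/2}$, the $\cos$-bound split at level $\vep_d/(t^2 d)^{1/4}$, and the choice $\vep_d=k^{-1/4}\tilde\vep_d$) controls these expectations independently of the threshold, I may quote it verbatim to obtain
$$
\P(s(\X^2,x^0)\ge s(\X^1,x^0))\le e^{-t\alpha^2 d^2}\,e^{-\frac{k}{2}\log(1-4d^2t^2)}\,(1+o(1)).
$$

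Finally I would optimize $t$ in the two regimes, exactly as before but scaling the optimizer by $\alpha^2$. Expanding the logarithm gives the exponent $-t\alpha^2 d^2+2kd^2t^2+\mathcal{O}(kt^4d^4)$. In the regime $d^4\ll k^3$ the choice $t=\frac{\alpha^2}{4k}$ makes the quartic error $o(1)$ and yields $-\frac{\alpha^4 d^2}{8k}$, hence condition (2); in the regime $k\le Cd^{4/3}$ the choice $t=\alpha^2 k^{-5/4}$ gives leading term $-\frac{\alpha^4 d^2}{k^{5/4}}$ dominating the correction $\frac{2\alpha^4 d^2}{k^{3/2}}$, hence condition (3). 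One still has to verify that these constant-rescaled choices remain compatible with the auxiliary requirement $-\frac{k^{-1/4}\tilde\vep_d^2}{2t\sqrt d}+\log(k)\to-\infty$ used to control the $\cos$-tail; but multiplying $t$ by the fixed constant $\alpha^2$ changes no asymptotics, so this holds for both choices whenever $d\ll k$. I do not expect a genuine obstacle here: the entire difficulty already resided in the moment estimates of Theorem \ref{theo_dense}, and the presence of $\alpha$ merely rescales the threshold and the optimal $t$, propagating a factor $\alpha^4$ into the final exponent.
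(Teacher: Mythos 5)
Your proposal is correct and is essentially the paper's own argument: the paper's proof of Corollary \ref{cor_dense} consists of a single sentence saying to follow the proof of Theorem \ref{theo_dense} using the condition $\sum_{l=1}^d x_l^0 x_l^1=\alpha d$, which is precisely what you do --- the signal term becomes $\alpha^2 d^2$, the moment-generating-function estimates carry over unchanged since they do not involve the threshold, and rescaling $t$ by $\alpha^2$ (i.e.\ $t=\frac{\alpha^2}{4k}$ and $t=\alpha^2 k^{-5/4}$ in the two regimes) propagates the factor $\alpha^4$ into the exponents. Your write-up in fact supplies more detail than the paper (the gauge transformation fixing $x^0\equiv 1$, the explicit optimization of $t$, and the verification of the auxiliary tail condition), all of which checks out.
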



\begin{proof}
The proof follows the lines of the proof of Theorem \ref{theo_dense} by using the condition that $\sum_{l=1}^d x_l^0 x_l^1=\alpha d$.
\end{proof}

\begin{rem}\normalfont
One might wonder, in how far taking a higher power than two of $\sum_{l=1}^d x_l^0 x_l^1$ in the computation of the score function changes the results. So let us assume we take
\begin{equation}\label{nspin}
s(\X^i, x^0)= \sum_{\mu: x^\mu \in \X^i}\sum_{l_1, \ldots l_n}^d x_{l_1}^0 x_{l_1}^0 \cdots x_{l_n}^\mu x_{l_n}^\mu.
\end{equation}
Then, of course, in the setting of the proof of Theorem \ref{theo_dense} we obtain
$$
s(\X^i, x^0)= d^n+ \sum_{\mu: x^\mu \in \X^1 \atop \mu \neq 1}\sum_{l_1, \ldots l_n}^d x_{l_1}^0 x_{l_1}^0 \cdots x_{l_n}^\mu x_{l_n}^\mu
$$
so we gain in the exponent. However the probability that $s(\X^1, x^0)$ is smaller than $s(\X^1, x^0)$ is then much more difficult to estimate. As a matter of fact, none of the techniques used in Sections 2 and 3 works, because a higher power cannot be linearized by Gaussian integration on the one hand, and exponential of powers larger than two of of Gaussian random variables are not integrable. A possible strategy could include exponential bounds as in Proposition 3.2. in \cite{Newman}. From here one also learns that in the  Hopfield model with $N$ neurons and $n$-spin interaction ($n  \ge 3$) the storage capacity grows like $N^{p-1}$. By similarity this could lead to the conjecture that replacing the
score function by \eqref{nspin} leads to a class size of $k \ll d^n$. However, in this the computational complexity of our algorithm would also increase.
\end{rem}

\section{Experiments}

In order to stress the performance of the proposed system when considering nonasymptotic parameters, we propose several experiments. In subsection~\ref{synthetic}, we analyze the performance when using synthetic data. In subsection~\ref{real}, we run simulations using standard off-the-shelf real data.

\subsection{Synthetic data}
\label{synthetic}

We first present results obtained using synthetic data. Unless specified otherwise, each drawn point is obtained using Monte-Carlo simulations with at least 100,000 independent tests.

\subsubsection{Sparse patterns}

Considering data to be drawn i.i.d. with:
\[
\P( x_i^\mu= 1)= \frac cd = 1- \P(x_i^\mu=0)
\]
recall that we have four free parameters: $c$ the number of 1s in patterns, $d$ the dimension of patterns, $k$ the number of pattern in each class and $q$ the number of classes.

Our first two experiments consist of varying $k$ then $q$ while the other parameters are fixed. We choose the parameters $d=128$ and $c=8$. Figure~\ref{sparseinfk} depicts the rate at which the highest score is not achieved by the class containing the query vector (we call it the error rate), as a function of $k$ and for $q=10$. This function is obviously increasing with $k$, and presents a high slope for small values of $k$, emphasizing the critical choice of $k$ for applications.

\begin{figure}[h]
  \begin{center}
    \includegraphics[width=0.6\textwidth]{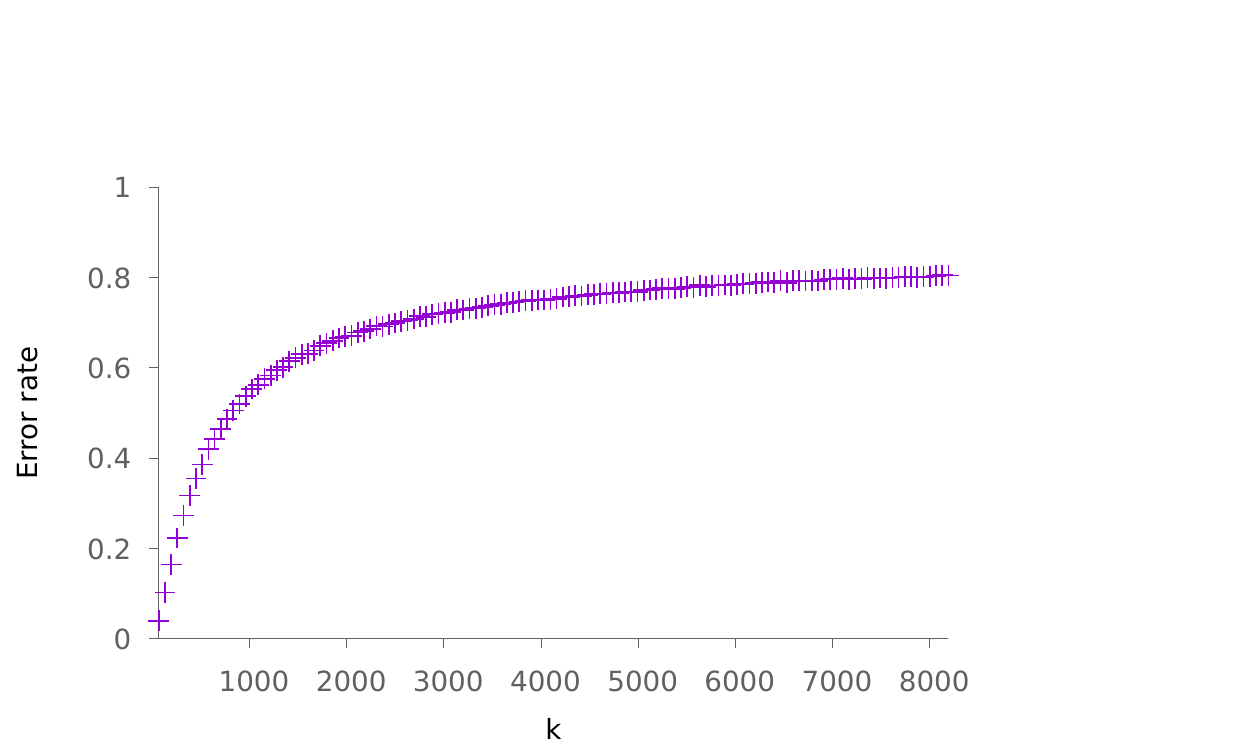}
  \end{center}
  \caption{Evolution of the error rate as a function of $k$. The other parameters are $q=10$, $d=128$ and $c=8$.}
  \label{sparseinfk}
\end{figure}

Figure~\ref{sparseinfq} depicts the probability of error as a function of $q$, for various choices of $k$. Interestingly, for reasonable values of $k$ the slope of the curve is not as dramatic as in Figure~\ref{sparseinfk}. When adjusting parameters, it seems thus more reasonable to increase the number of classes rather than the number of patterns in each class. This is not a surprising finding as the complexity of the process depends on $q$ but not on $k$.

\begin{figure}[h]
  \begin{center}
    \includegraphics[width=0.6\textwidth]{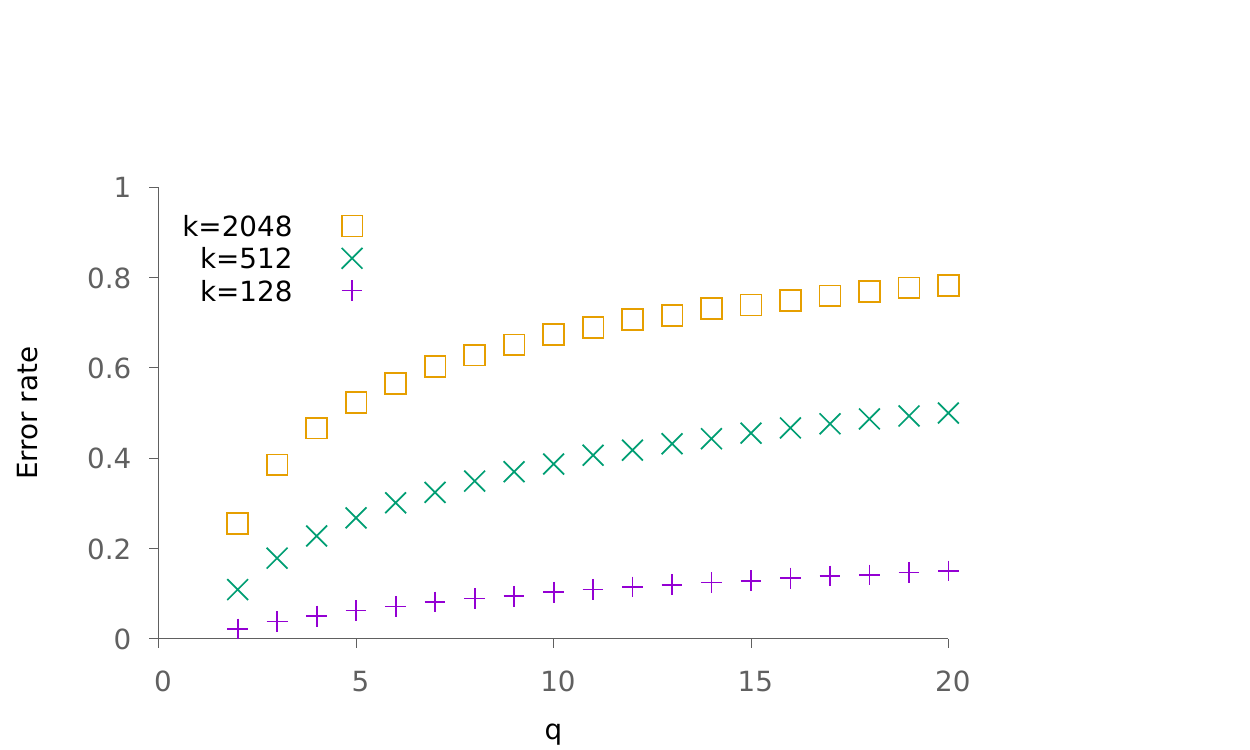}
  \end{center}
  \caption{Evolution of the error rate as a function of $q$ and for various values of $k$. The other parameters are $d=128$ and $c=8$.}
  \label{sparseinfq}
\end{figure}

A typical designing scenario would consist in, given data and its parameters $c$ and $d$, finding the best trade-off between $q$ and $k$ in order to split the data in the different classes. Figure~\ref{sparsedesign} depicts the error rate as a function of $k$ when $n = kq$ is fixed. To obtain these points we consider $c=8$, $d=128$, and $n = 16384$. There are two interesting facts that are pointed out in this figure. First we see that the trade-off is not simple as multiple local minima happens. This is not surprising as, with larger values of $k$, the decision becomes less precise and promissory. As a matter of fact, the last point in the curve only claims the correct answer is somewhere in a population of $8192$ possible solutions whereas the first point claims it is one of the $64$ possible ones. On the other hand, there is much more memory consumption for the first point where 256 square matrices of dimension $128$ are stored whereas only $2$ of them are used for the last point. Second, the error rate remains of the same order for all tested couples of values $k$ and $q$. This is an interesting finding as it emphasizes the design of a solution is more about complexity vs. precision of the answer -- in the sense of obtaining a reduced number of candidates -- than it is about error rate.

\begin{figure}[h]
  \begin{center}
    \includegraphics[width=0.6\textwidth]{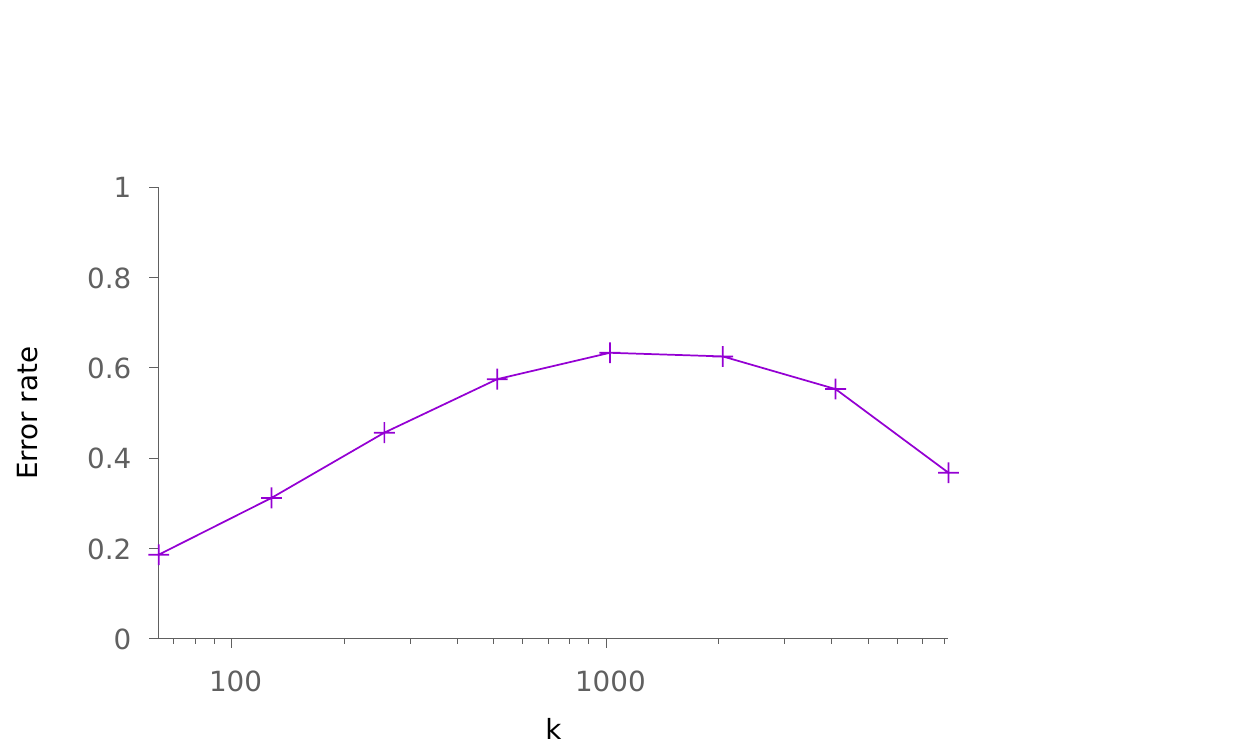}
  \end{center}
  \caption{Evolution of the error rate for a fixed number of stored messages $n = 16384$ as a function of $k$ (recall that $q = n / k$). The generated messages are such that $d=128$ and $c = 8$.}
  \label{sparsedesign}
\end{figure}

Finally, in order to evaluate both the tightness of the bounds obtained in Section~\ref{sparse} and the speed of convergence, we run an experiment in which $c = \log_2(d)$, $q=2$. We then depict the error rate as a function of $d$ and when $k= d^{1.5}$, $k=d^2$ and $k=d^{2.5}$. The result is depicted on Figure~\ref{sparseasym}. This figure supports the fact the obtained bound is tight, as illustrated by the curve corresponding to the case $k=d^2$ for which the error rate appears almost constant.

\begin{figure}[h]
  \begin{center}
    \includegraphics[width=0.6\textwidth]{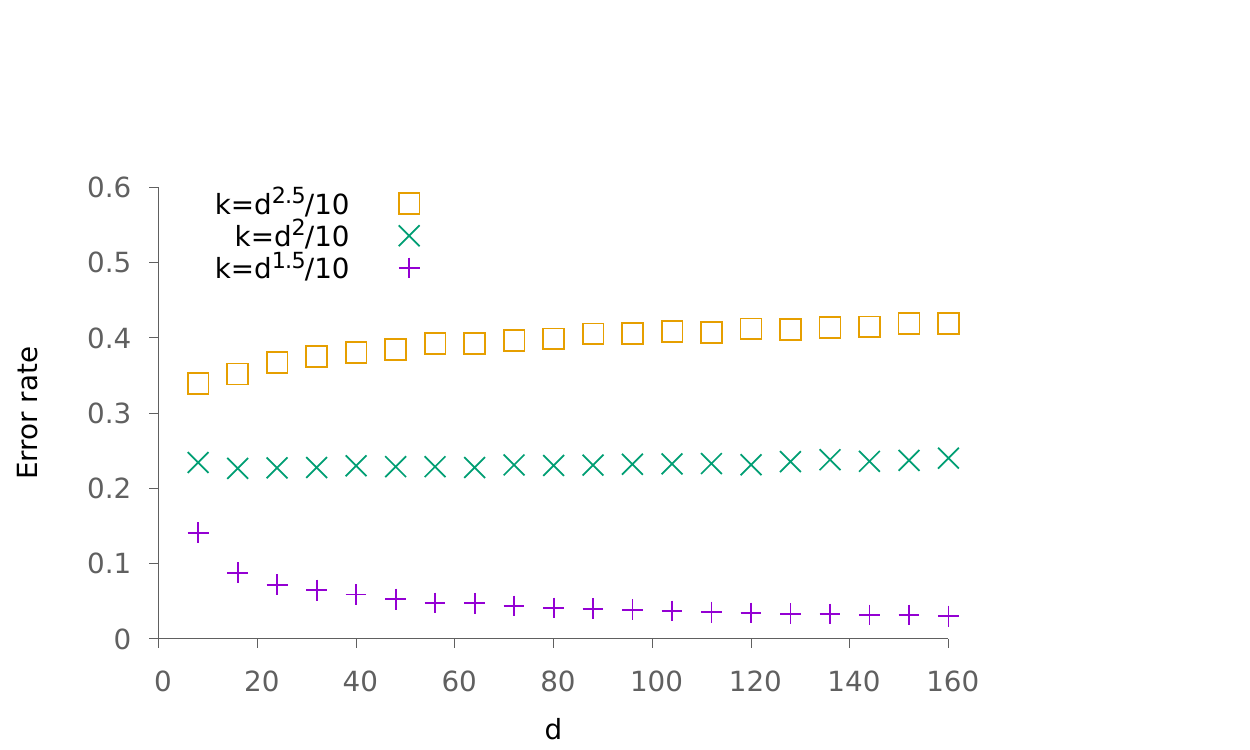}
  \end{center}
  \caption{Evolution of the error rate as a function of $d$. The other parameters are $q = 2$, $c = \log_2(d)$ and $k = d^\alpha/10$ with various values of $\alpha$.}
  \label{sparseasym}
\end{figure}

We ran the same experiments using cooccurence rules as initially proposed in~\cite{yu2015neural} (instead of adding contributions from distinct messages, we take the maximum). We observed small improvements in every case, even though they are not significant.

\subsubsection{Dense patterns}

Let us now consider data to be drawn i.i.d. according to the distribution :
\[
\P( x_i^\mu= 1)= \frac 12 = \P(x_i^\mu=-1)\;,
\]
recall that we then have three free parameters: $d$ the dimension of patterns, $k$ the number of patterns in each class and $q$ the number of classes.

Again, we begin our experiments by looking at the influence of $k$ (resp. $q$) to the performance. The obtained results are depicted in Figure~\ref{denseinfk} (resp. Figure~\ref{denseinfq}). To conduct these experiments, we have chosen $d=64$. The global slope resembles that of Figure~\ref{sparseinfk} and Figure~\ref{sparseinfq}.

\begin{figure}[h]
  \begin{center}
    \includegraphics[width=0.6\textwidth]{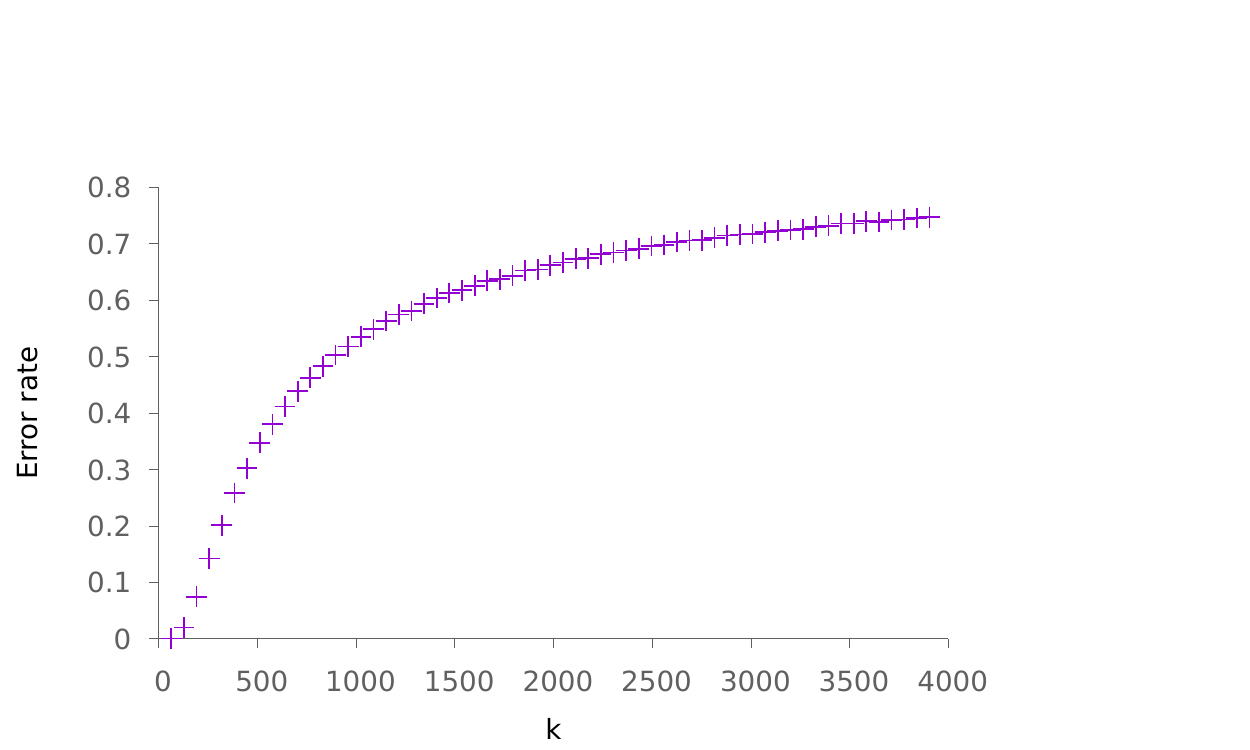}
  \end{center}
  \caption{Evolution of the error rate as a function of $k$. The other parameters are $q=10$ and $d=64$.}
  \label{denseinfk}
\end{figure}

\begin{figure}[h]
  \begin{center}
    \includegraphics[width=0.6\textwidth]{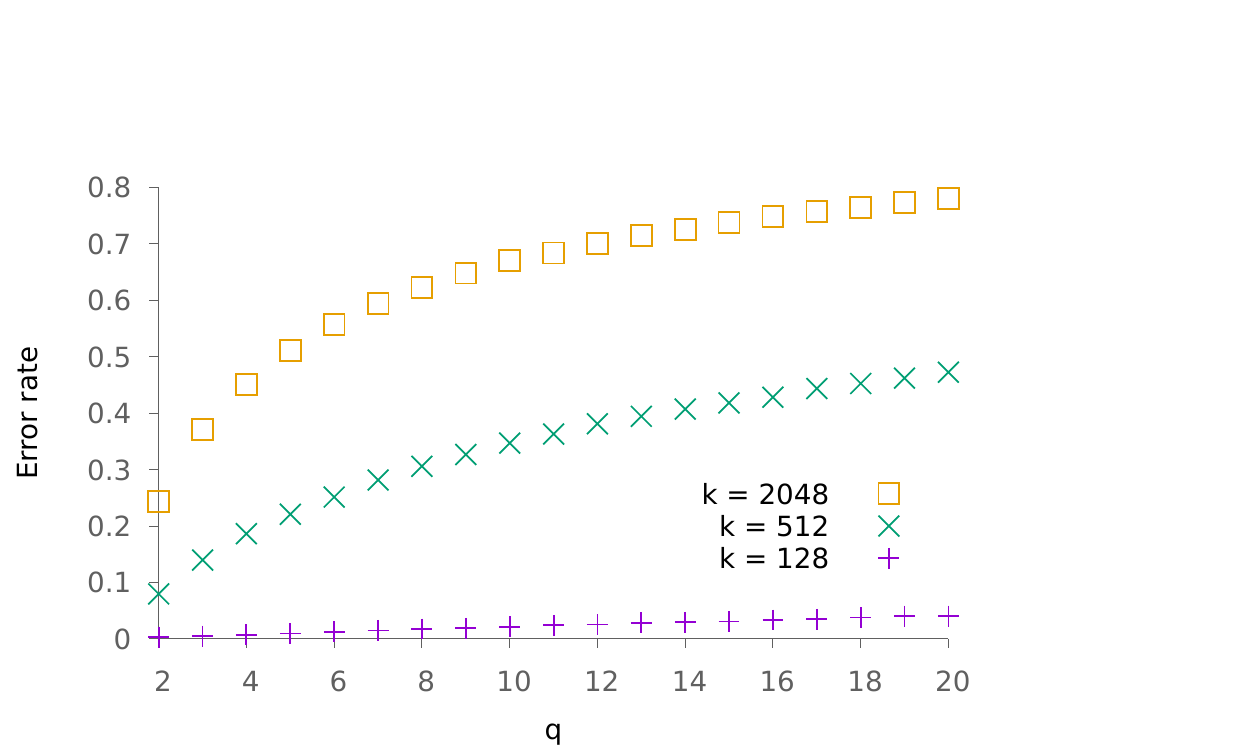}
  \end{center}
  \caption{Evolution of the error rate as a function of $q$. We fix the value $d=64$ and consider various values of $k$.}
  \label{denseinfq}
\end{figure}

Then, we consider the designing scenario where the number of samples is known. We plot the error rate depending on the choice of $k$ (and thus the choice of $q = n /k$). The results are depicted in Figure~\ref{densetradeoff}.

\begin{figure}[h]
  \begin{center}
    \includegraphics[width=0.6\textwidth]{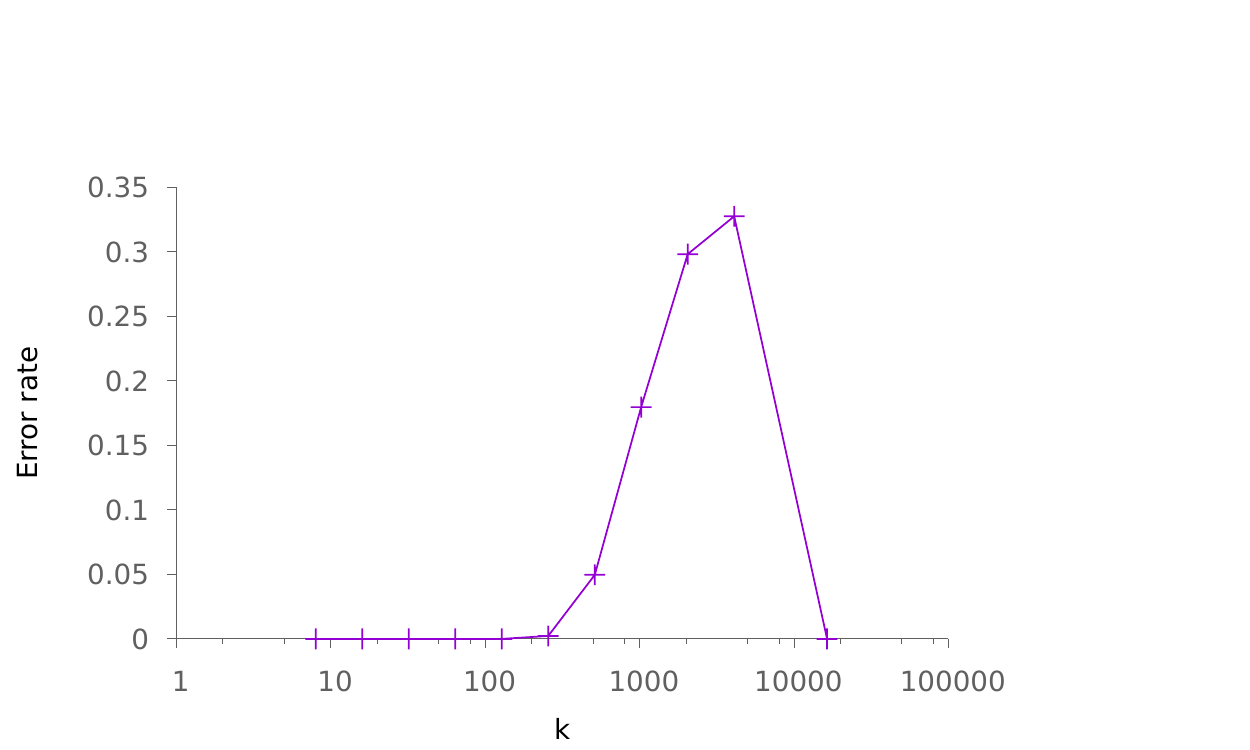}
  \end{center}
  \caption{Evolution of the error rate for a fixed total number of samples as a function of $k$. The other parameters are $n=16384$ and $d=64$.}
  \label{densetradeoff}
\end{figure}

Finally, we estimate the speed of convergence together with the tightness of the bound obtained in Section~\ref{dense}. To do so, we choose $k$ as a function of $d$ with $q=2$. The obtained results are depicted in Figure~\ref{denseasym}. Again, we observe that the case $k=d^2$ appears to be a limit case where error rate remains constant.

\begin{figure}[h]
  \begin{center}
    \includegraphics[width=0.6\textwidth]{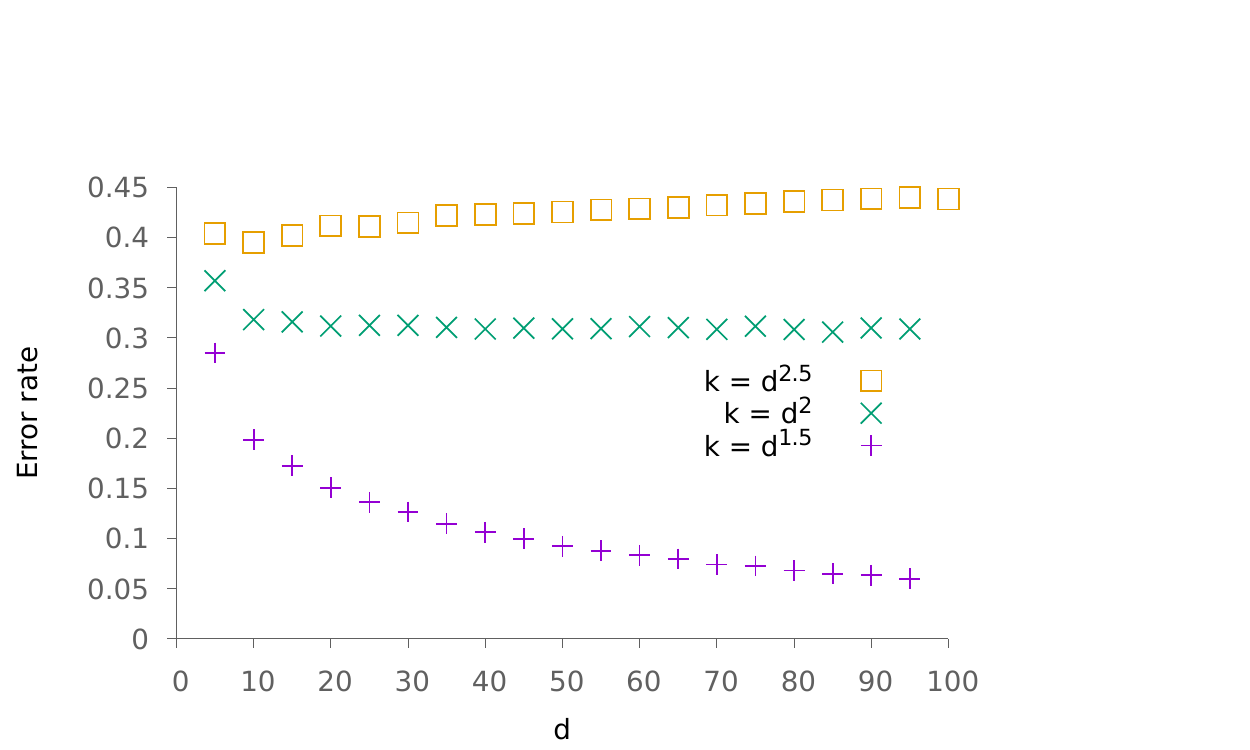}
  \end{center}
  \caption{Evolution of the error rate as a function of $d$. In this scenario, we choose $k = d^\alpha$ with various values of $\alpha$ and $q=2$.}
  \label{denseasym}
\end{figure}

\subsection{Real data}
\label{real}
In this section we use real data in order to stress the performance of the obtained methods on authentic scenarios.

Since we want to stress the interest of using our proposed method instead of classical exhaustive search, we are mainly interested in looking at the error as a function of the complexity. To do so, we modify our method as follows: we compute the scores of each class and order them from largest to smallest. We then look at the rate for which the nearest neighbor is in one of the first $p$ classes. With $p=1$, this method boils down to the previous experiments. Larger values of $p$ allows for more flexible trade-offs of errors versus performance. We call the obtained ratio the recall@1.

Considering $n$ vectors with dimension $d$, the computational complexity of an exhaustive search is $d n$ (or $c n$ for sparse vectors). On the other hand, the proposed method has a twofold computational cost: first the cost of computing each score, which is $d^2 q$ (or $c^2 q$ for sparse vectors), then the cost of exhaustively looking for the nearest neighbor in the selected $p$ classes, which is $p k d$ (or $p k c$ for sparse vectors). Note that the cost of ordering the $q$ obtained scores is negligible. In practice, distinct classes may have distinct number of elements, as explained later. In such case, complexity is estimated as an average of elementary operations (addition, multiplication, accessing a memory element) performed for each search.

When appliable, we also compare the performance of our method with Random Sampling (RS) to build a search tree, the methodology used by PySparNN\footnote{\url{https://github.com/facebookresearch/pysparnn}}, or Annoy\footnote{\url{https://github.com/spotify/annoy}}. In their method, random sampling of $r$ elements in the collection of vectors is performed, we call them anchor points. All elements from the collection are then attached to their nearest anchor point. When performing search, nearest anchor points are first searched then an exhaustive computation is performed with corresponding attached elements. Finally, we also look at the performance of a hybrid method in which associative memories are first used to identify which part of the collection should be investigated, then these parts are treated independently using the RS methodology.

There are several things that need to be changed in order to accommodate for real data. First, for nonsparse data we center data and then project each obtained vector on the hypersphere with radius 1. Then, due to the nonindependence of stored patterns, we choose an allocation strategy that greedily assign each vector to a class, rather than using a completely random allocation.

The allocation strategy we use consists of the following: each class is initialized with a random vector drawn without replacement. Then each remaining vector is assigned to the class that achieves the maximum normalized score. Scores are divided by the number of items $k$ currently contained in the class, as a normalization criterion. Note that more standard clustering techniques could be used instead.

The interest of this allocation strategy is emphasized in Figure~\ref{mnist}, where we use raw MNIST data to illustrate our point. MNIST data consists of grey-level images with 784 pixels representing handwritten digits. There are 60,000 reference images and 10,000 query ones. For various values of $k$, we compare the performance obtained with our proposed allocation and compare it with a completely random one. We also plot the results obtained using the RS methodology. We observe that in this scenario where the dimension of vectors is large with respect to their number, associative memories are less performant than their RS counterparts.

\begin{figure}[h]
  \begin{center}
    \includegraphics[width=0.6\textwidth]{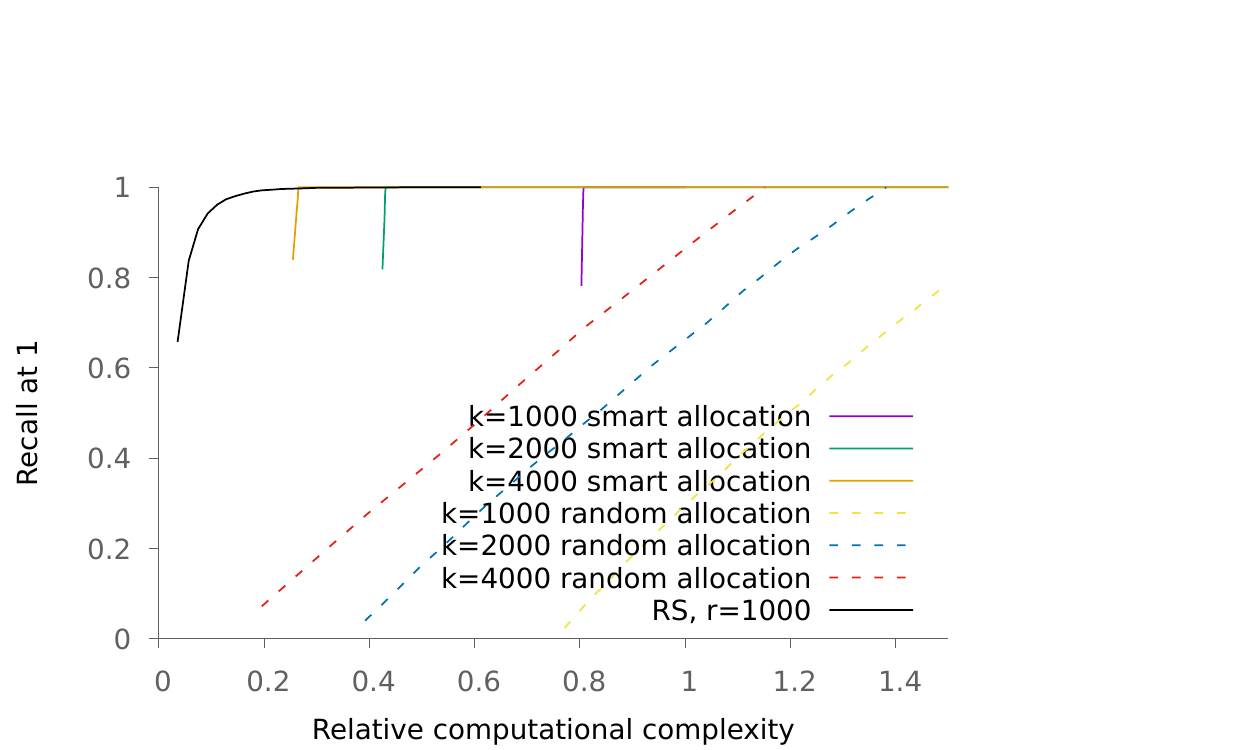}
  \end{center}
  \caption{Recall@1 on the MNIST dataset as a function of the relative complexity of the proposed method with regards to an exhaustive search, for various values of $k$ and allocation methods. Each curve is obtained by varying the value of $p$.}
  \label{mnist}
\end{figure}

We also run some experiments on a binary database. It consists of Santander customer satisfaction sheets associated with a Kaggle contest. There are 76,000 vectors with dimension 369 containing 33 nonzero values in average. In this first experiment, the vectors stored in the database are the ones used to also query it. The obtained results are depicted in Figure~\ref{kaggle}.

\begin{figure}[h]
  \begin{center}
    \includegraphics[width=0.6\textwidth]{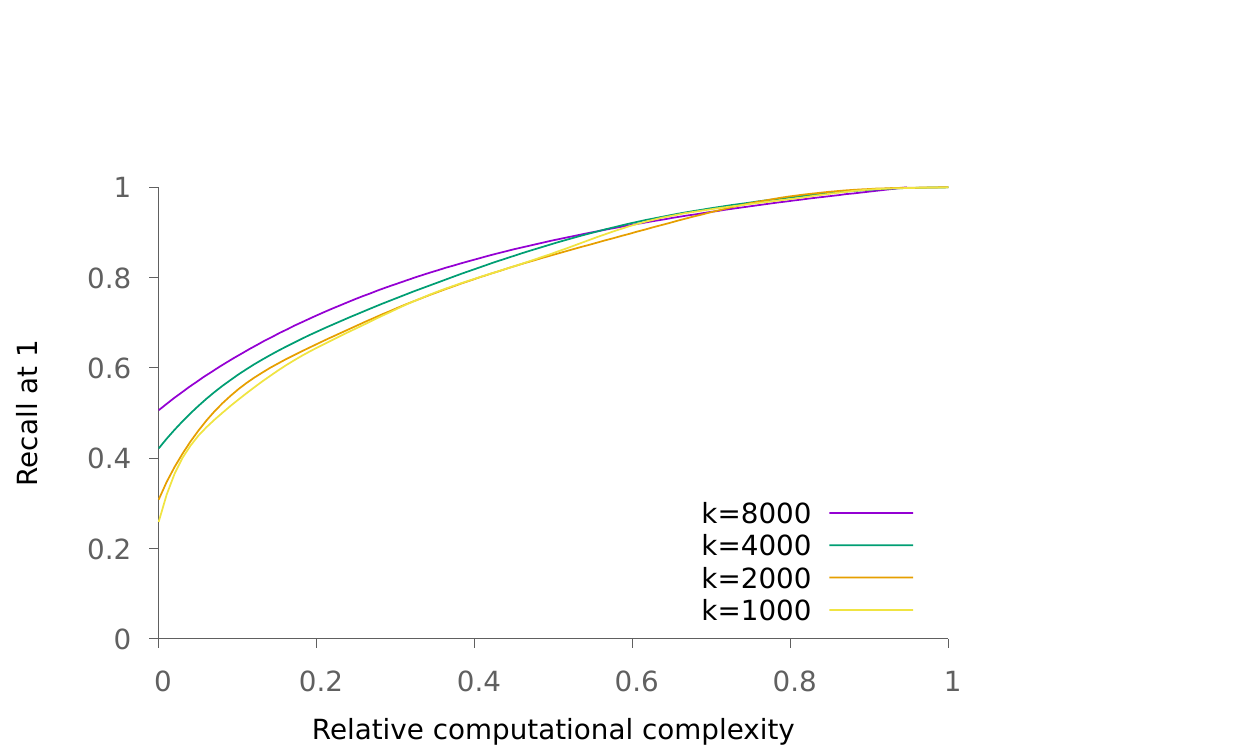}
  \end{center}
  \caption{Recall@1 on the Santander customer satisfaction dataset as a function of the relative complexity of the proposed method with regards to an exhaustive nearest neighbor search, for various values of $k$. Each curve is obtained by varying the value of $p$.}
  \label{kaggle}
\end{figure}

Then, we run experiments on the SIFT1M dataset\footnote{\url{http://corpus-texmex.irisa.fr/}}. This dataset contains 1 million 128-dimensions SIFT descriptors obtained using a Hessian-affine detector, plus 10,000 query ones.
The obtained results are depicted in Figure~\ref{SIFT1M}. As we can see here again the RS methodology is more efficient than the proposed one, but we managed to find hybrid parameters for which performance is improved.
To stress consistency of results, we also run experiments on the GIST1M dataset. It contains 1 million 960-dimensions GIST descriptors and 1,000 query ones. The obtained results are depicted in Figure~\ref{GIST1M}.

\begin{figure}[h]
  \begin{center}
    \includegraphics[width=0.6\textwidth]{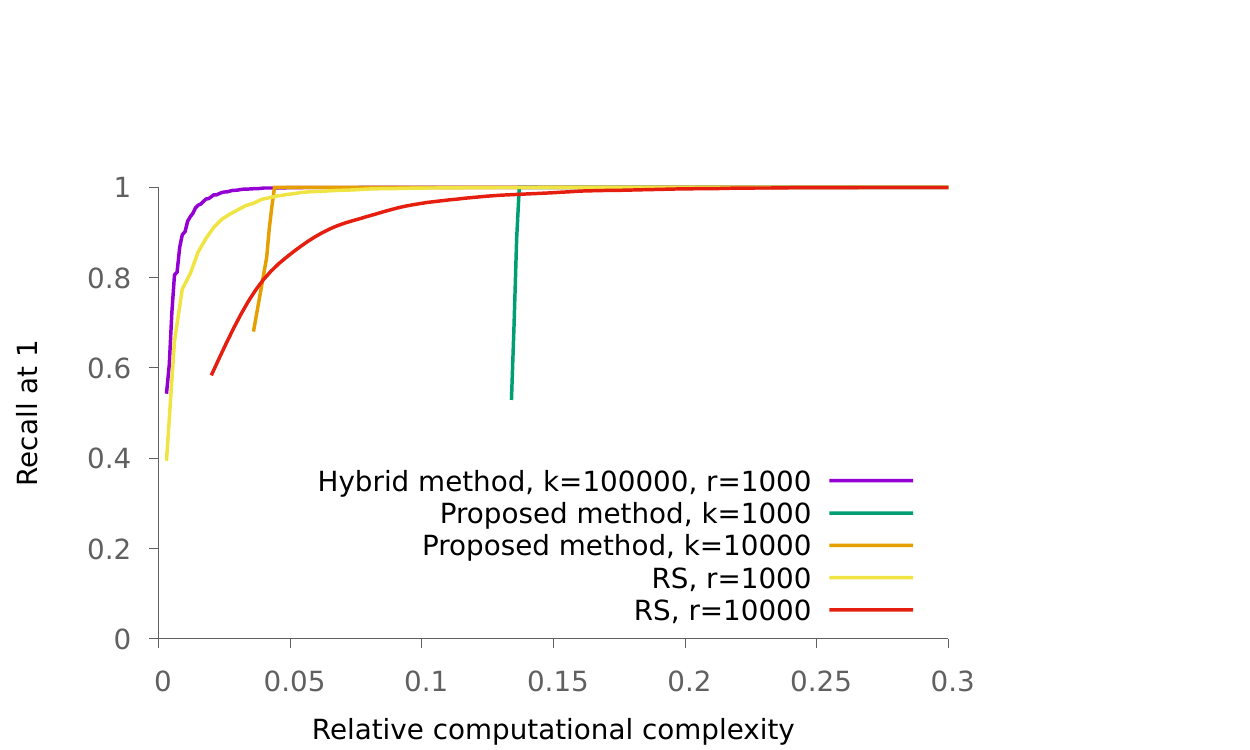}
  \end{center}
  \caption{Recall@1 on the SIFT1M dataset as a function of the relative complexity of the proposed method with regards to an exhaustive nearest neighbor search, for various values of $k$. Each curve is obtained by varying the value of $p$.}
  \label{SIFT1M}
\end{figure}

\begin{figure}[h]
  \begin{center}
    \includegraphics[width=0.6\textwidth]{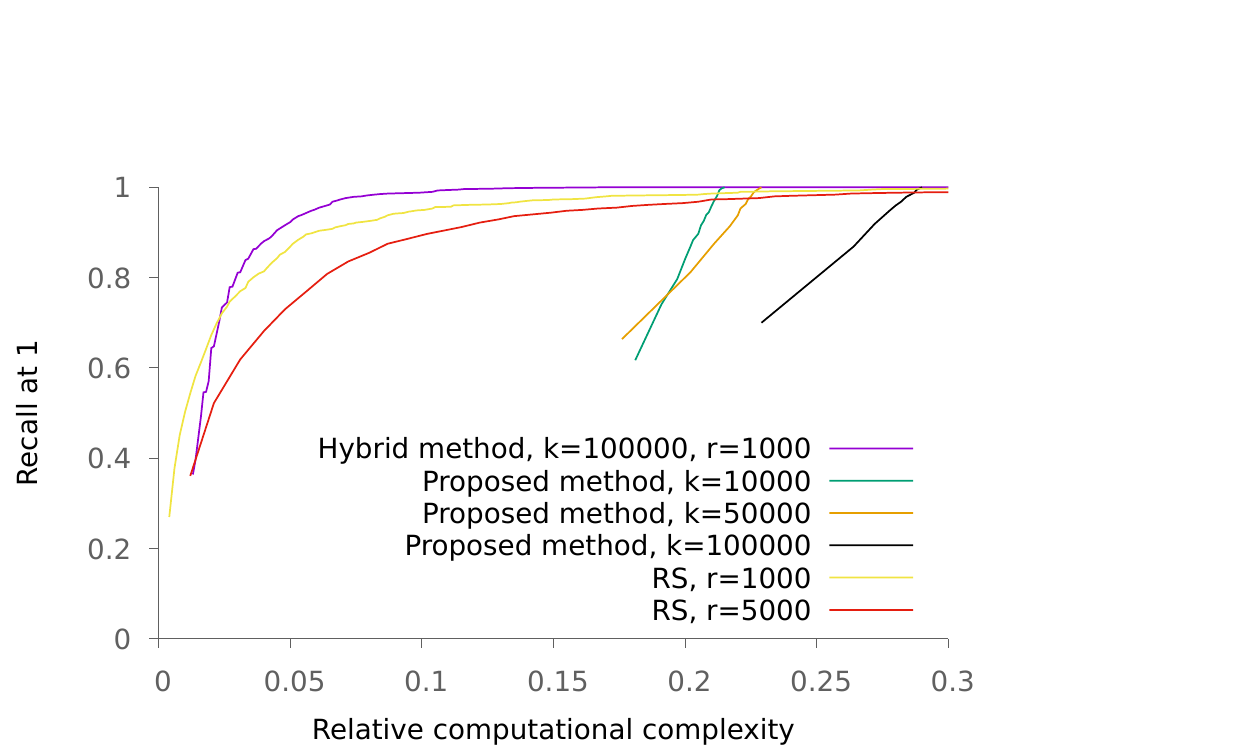}
  \end{center}
  \caption{Recall@1 on the GIST1M dataset as a function of the relative complexity of the proposed method with regards to an exhaustive nearest neighbor search, for various values of $k$. Each curve is obtained by varying the value of $p$.}
  \label{GIST1M}
\end{figure}

\section{Conclusion}

We introduced a technique to perform approximate nearest neighbor search using associative memories to eliminate most of the unpromising candidates. Considering independent and identically distributed binary random variables, we showed there exists some asymptotical regimes for which both the error probability tends to zero and the complexity of the process becomes negligible compared to an exhaustive search. We also ran experiments on synthetic and real data to emphasize the interest of the method on realistic application scenarios.

A particular interest of the method is its ability to cope with very high dimension vectors, and even to provide better performance as the dimension of data increases. When combined with reduction dimension methods, it arises interesting perspectives on how to perform approximate nearest neighbor search with limited complexity.

There are many open ideas on how to improve the method further, including and not limiting to using smart pooling to directly identify the nearest neighbor without need to perform an exhaustive search, cascading the process using hierarchical partitioning, or improving the allocation strategies.

\bibliographystyle{abbrv}

\bibliography{LiteraturDatenbank}

\end{document}